\documentclass{article}

\usepackage{microtype}
\usepackage{graphicx}
\usepackage{booktabs, multirow, array} % tables
\usepackage{url}
\usepackage{xcolor}
\usepackage{listings}
\usepackage{hyperref}

\usepackage[preprint]{icml2026}
\makeatletter
\renewcommand{\Notice@String}{}
\makeatother

\usepackage[utf8]{inputenc}
\usepackage[T1]{fontenc}
\usepackage{amsmath, amssymb, amsthm, mathtools}
\hypersetup{
  colorlinks=true,
  linkcolor=blue,
  citecolor=blue,
  urlcolor=blue
}

\theoremstyle{plain}
\newtheorem{theorem}{Theorem}

\newtheorem{proposition}{Proposition}
\newtheorem{corollary}{Corollary}

\theoremstyle{definition}
\newtheorem{definition}{Definition}

\theoremstyle{remark}

\newcommand{\E}{\mathbb{E}}
\newcommand{\Pp}{\mathbb{P}}

\newcommand{\Ind}{\mathbf{1}}
\newcommand{\KL}{\mathrm{KL}}
\newcommand{\MI}{\mathrm{I}}
\newcommand{\Hh}{\mathrm{H}}
\newcommand{\ber}{\mathrm{Ber}}
\newcommand{\doop}{\mathrm{do}}

\newcommand{\Cset}{\mathcal{C}}
\newcommand{\Vocab}{\mathcal{V}}

\newcommand{\GateGap}{\mathrm{GateGap}}
\newcommand{\ValueGap}{\mathrm{ValueGap}}

\providecommand{\cref}[1]{\ref{#1}}
\providecommand{\Cref}[1]{\ref{#1}}

\icmltitlerunning{Attention Deficits in Language Models}

\begin{document}
	\twocolumn[
		  \icmltitle{Attention Deficits in Language Models:\\
		  Causal Explanations for Procedural Hallucinations}
			  \begin{icmlauthorlist}
			    \icmlauthor{Ahmed Karim}{hl}
			    \icmlauthor{Fatima Sheaib}{hl}
			    \icmlauthor{Zein Khamis}{hl}
			    \icmlauthor{Maggie Chlon}{hl}
			    \icmlauthor{Jad Awada}{hl}
			    \icmlauthor{Leon Chlon}{hl,ox}
			  \end{icmlauthorlist}
			  \icmlaffiliation{hl}{Hassana Labs}
			  \icmlaffiliation{ox}{University of Oxford}
			  \icmlcorrespondingauthor{Leon Chlon}{leo@hassana.io}
			  \icmlkeywords{Long context, hallucination, interpretability}
			  \vskip 0.3in
			]
\printAffiliationsAndNotice{}  % required even if empty

\begin{abstract}
Large language models can follow complex procedures yet fail at a seemingly trivial final step: reporting a value they themselves computed moments earlier. We study this as \emph{procedural hallucination}: failure to execute a \emph{verifiable, prompt-grounded specification} even when the correct value is present in-context. In long-context binding tasks with a known single-token candidate set, we find that many errors are \emph{readout-stage routing failures}. Specifically, failures decompose into \emph{Stage~2A (gating)} errors where the model does not enter answer mode, and \emph{Stage~2B (binding)} errors where it enters answer mode but selects the wrong candidate (often due to recency bias). In the hard regime, Stage~2B accounts for most errors across model families in our tasks (Table~\ref{tab:stage2ab_multimodel}). On Stage~2B error trials, a linear probe on the final-layer residual stream recovers the correct value far above chance (e.g., 74\% vs.\ 2\% on Qwen2.5-3B; Table~\ref{tab:probe_stage2b}), indicating the answer is encoded but not used. We formalize ``present but not used'' via available vs.\ used mutual information and pseudo-prior interventions, yielding output-computable diagnostics and information-budget certificates. Finally, an oracle checkpointing intervention that restates the true binding near the query can nearly eliminate Stage~2B failures at long distance (e.g., Qwen2.5-3B 0/400$\to$399/400 at $k=1024$; Table~\ref{tab:checkpoint_results_full}).
\end{abstract}

\section{Introduction}

When a language model fabricates a historical date or invents a citation, we call this a \emph{factual hallucination}: the model lacks the relevant knowledge. But a different failure mode is equally important and far less understood. Consider the \textbf{strawberry counting task} (see Appendix~\ref{sec:strawberry_appendix}): \textbf{``How many r's are in strawberry?''} \textbf{The model shows its work via chain-of-thought}, enumerating letters and maintaining a running count. For short traces, this works. But as the trace grows longer, the final answer drifts: at length $k=20$, the model outputs 6; at $k=30$, it outputs 8. The computation was faithful throughout. The failure occurred at the final step, where the readout mechanism selected a wrong statistic (a cumulative or recency-weighted count) instead of the intended one. The model did not lack the information; it simply failed to use it.

We call this a \textbf{procedural hallucination}: a failure to follow a \emph{verifiable prompt-grounded specification}. Formally, we study settings where a specification function $g(W)$ deterministically maps a prompt $W$ to a single-token answer in a known candidate set, and a procedural hallucination occurs when the model outputs $\hat Y \neq g(W)$ despite $g$ being unambiguous (Definition~\ref{def:procedural_hallucination}). Unlike factual hallucination, which concerns world knowledge, procedural hallucination concerns the model's ability to faithfully execute a specification given in its own context. When the required evidence is already present in-context, scaling or retrieval is not sufficient on its own: the remaining challenge is \emph{routing} that evidence to the output.

This is not a knowledge failure. The model computed the correct value; probing its hidden state on error trials recovers the answer with high accuracy. The information is present; it is not routed to the output. This observation raises two questions. First, can we decompose these failures into interpretable stages? Second, can we formalize ``information is present but not used'' in a way that yields provable guarantees? This paper answers both.

\subsection{Overview of Contributions}

We develop a theory of procedural hallucinations grounded in information theory and causal reasoning, then validate it empirically across model families.

\paragraph{Results at a glance (core claims).}
\begin{itemize}
\item \textbf{Stage~2B dominates in the hard regime:} when these binding tasks fail, the model usually \emph{enters answer mode} but selects the \emph{wrong candidate} (Stage~2B), rather than refusing to answer (Stage~2A) (Table~\ref{tab:stage2ab_multimodel}, Figure~\ref{fig:spotlight_results}).
\item \textbf{Information is present on error trials:} on Stage~2B errors, a linear probe on the final-layer residual stream recovers the correct value far above chance (e.g., 74\% vs.\ $\approx$2\% on Qwen2.5-3B) (Table~\ref{tab:probe_stage2b}).
\item \textbf{Checkpointing restores long-context binding:} restating the true binding near the query can convert near-zero baseline accuracy into near-perfect accuracy at long distance (e.g., Qwen2.5-3B 0/400$\to$399/400 at $k=1024$) (Table~\ref{tab:checkpoint_results_full}, Figure~\ref{fig:spotlight_results}).
\end{itemize}

\paragraph{Scope.} Our main empirical results use synthetic long-context binding tasks (\textsc{competing\_vars}, \textsc{primacy\_recency}) where the target is a single token from a known candidate set (Definition~\ref{def:procedural_hallucination}); we additionally include one option-randomized naturalistic variant (\texttt{notes\_binding}; Appendix Table~\ref{tab:notes_binding_scale}). We do not claim that the same mechanisms fully explain all long-context failures in naturalistic tasks such as document QA; we treat that as an open validation target (Limitations, Section~\ref{sec:limitations}).

\paragraph{Contributions (with evidence pointers).}
\begin{itemize}
\item \textbf{Stagewise decomposition + diagnostics.} We define logprob-derived margins ($\GateGap$, $\ValueGap$) that classify each error as Stage~2A (gating) vs.\ Stage~2B (binding), and show Stage~2B dominates errors in the hard regime across model families (Definition~\ref{def:procedural_hallucination}, Table~\ref{tab:stage2ab_multimodel}).
\item \textbf{Routing-efficiency theory.} We formalize ``present but not used'' via available vs.\ used information ($I_{\mathrm{avail}}$, $I_{\mathrm{used}}$) and derive certificates and slack decompositions connecting error rates to information budgets (Section~\ref{sec:theory}, Appendix proofs).
\item \textbf{Pseudo-priors as causal baselines.} We introduce pseudo-priors induced by evidence-scrubbing interventions and prove tight ``bits-to-trust'' lower bounds for overcoming model biases (Section~\ref{sec:pseudo_priors}, Theorem~\ref{thm:bernoulli_projection}).
\item \textbf{Empirical + mechanistic validation and mitigation.} We show (i) probes recover the correct value on Stage~2B errors (Table~\ref{tab:probe_stage2b}), (ii) activation patching localizes failures to late components (Section~\ref{sec:empirical}), and (iii) checkpointing recovers accuracy at long distance (Table~\ref{tab:checkpoint_results_full}).
\end{itemize}
\paragraph{Artifact.} An anonymized reproducibility package (scripts + cached outputs) is included in the supplemental material; see Appendix Section~\ref{sec:toolkit}.

\begin{figure}[t]
\centering
\includegraphics[width=0.90\columnwidth]{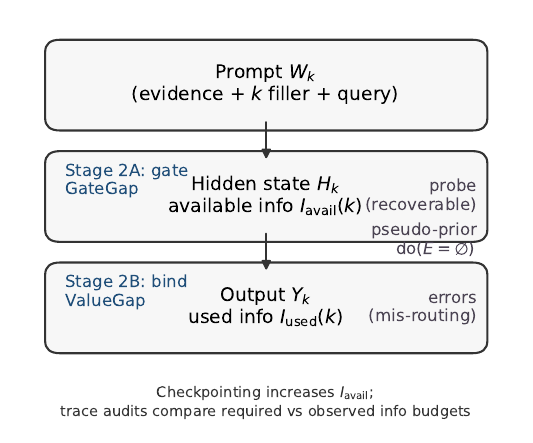}
\caption{Framework overview: Stage~2A gating and Stage~2B binding failures correspond to low routing efficiency $I_{\mathrm{used}}/I_{\mathrm{avail}}$, diagnosed via pseudo-prior interventions.}
\label{fig:procedural_overview}
\end{figure}

\section{Related Work}

Procedural hallucinations relate to long-context failures where the answer is explicitly present yet not produced (``know but don't tell'') \cite{liu2024lost,lu2024know,wu2024retrieval} and to broader hallucination/faithfulness taxonomies \cite{ji2023survey,maynez2020faithfulness}. Our diagnostics draw on mechanistic interpretability (causal tracing/editing, circuit discovery, and binding analyses) \cite{patching_general,conmy2023automated,wang2023interpretability,elhage2021mathematical} and information-theoretic frameworks for representation and bottlenecks \cite{tishby1999information,belghazi2018mine,coverthomas}. We operationalize these threads into an API-compatible stagewise decomposition, routing efficiency metric, and pseudo-prior information budgets.

\section{Stagewise Slot Population}
\label{sec:stage}

We now formalize the setting in a way that is both precise and operationally compatible with API-accessible outputs. The key insight is that slot-population errors can fail in two distinct ways, and distinguishing them clarifies both diagnosis and mitigation.
We refer to these as Stage~2A/2B to emphasize that they are \emph{readout-stage} failures (Stage~2): they occur after an implicit Stage~1 in which the model computes/encodes the correct value in its hidden state.

\begin{definition}[Procedural hallucination]
\label{def:procedural_hallucination}
Fix a candidate set $\Cset\subset \Vocab$ and a specification function $g$ that deterministically maps a prompt $W$ to a target value $V=g(W)\in\Cset$.
Let $\hat Y(W)$ denote the model's next-token output at the readout position under greedy decoding.
We say the model exhibits a \emph{procedural hallucination} on $W$ if $\hat Y(W)\neq g(W)$ in settings where $g(W)$ is unambiguous and verifiable from the prompt (i.e., the answer is deterministically recoverable by an explicit procedure).
\end{definition}

\subsection{Prompt Structure}

A prompt $W_k$ has three components: (i) a binding region where keys are assigned values, (ii) $k$ filler tokens, and (iii) a query requesting a specific value. Values come from a candidate set $\Cset\subset \Vocab$ of single-token strings. We study two task variants:

\begin{table}[h]
\centering
\scriptsize
{\setlength{\tabcolsep}{3pt}
\begin{tabular}{@{}p{0.47\linewidth}p{0.47\linewidth}@{}}
\toprule
\textbf{\textsc{competing\_vars}} & \textbf{\textsc{primacy\_recency}} \\
\midrule
\texttt{KEY1 = [apple]} & \texttt{KEY = [alpha]} \\
\textit{[k filler tokens]} & \textit{[k filler tokens]} \\
\texttt{KEY2 = [banana]} & \texttt{KEY = [beta]} \\
\textit{[k filler tokens]} & \textit{[k filler tokens]} \\
\texttt{What is KEY1? KEY1 = [} & \texttt{KEY = [gamma]} \\
 & \textit{[k filler tokens]} \\
 & \texttt{What was the FIRST value of KEY? KEY = [} \\
\midrule
\multicolumn{1}{p{0.47\linewidth}}{\small Different keys hold different values; the competitor is a distractor.} &
\multicolumn{1}{p{0.47\linewidth}}{\small Same key reassigned; recency bias actively competes with the correct answer.} \\
\bottomrule
\end{tabular}
}
\end{table}

\subsection{Stage~2A: Does the Model Enter Answer Mode?}

Let $z_\theta(x\mid W_k)$ denote the next-token logit for token $x$ at the readout position. We define:
\begin{align}
Y_k &:= \arg\max_{v\in \Cset} z_\theta(v\mid W_k) && \text{(best candidate)} \\
\hat Y_k &:= \arg\max_{x\in \Vocab} z_\theta(x\mid W_k) && \text{(best overall token)}
\end{align}
The indicator $G_k:=\Ind\{\hat Y_k\in \Cset\}$ marks whether the model ``enters answer mode'' by outputting a candidate token. We quantify this with the \textbf{gate margin}:
\[
\GateGap(W_k)\;:=\;\max_{v\in \Cset} z_\theta(v\mid W_k)\;-\;\max_{x\notin \Cset} z_\theta(x\mid W_k).
\]
When $\GateGap > 0$, the model will output a candidate; when $\GateGap < 0$, it will output something else entirely (e.g., punctuation or a hedging phrase). Negative gate margin indicates a Stage~2A failure.

\subsection{Stage~2B: Does the Model Select the Right Candidate?}

Conditional on entering answer mode, we ask whether the model selects correctly. Let $V\in \Cset$ denote the ground-truth value. The \textbf{value margin} is:
\[
\ValueGap(W_k)\;:=\; z_\theta(V\mid W_k)\;-\;\max_{v\in \Cset\setminus\{V\}} z_\theta(v\mid W_k).
\]
Positive value margin means correct binding; negative value margin means the model prefers a wrong candidate. This is a Stage~2B failure.

\subsection{Why This Decomposition Matters}

The distinction between Stage~2A and Stage~2B is not merely taxonomic. The two failure modes have different signatures and different remedies:
\begin{itemize}
\item \textbf{Stage~2A failures} indicate that the model does not recognize the query as requesting a value from $\Cset$. This is a format or instruction-following problem.
\item \textbf{Stage~2B failures} indicate that the model understands the task format but routes to the wrong value. This is an information-routing problem, often driven by recency bias.
\end{itemize}
We report global accuracy $\Pp(\hat Y_k = V)$, candidate accuracy $\Pp(Y_k = V)$, and an \emph{error decomposition} into gating vs.\ binding failures via
$\mathrm{Frac}\text{-}2A := \Pp(G_k=0 \mid \hat Y_k\neq V)$ and $\mathrm{Frac}\text{-}2B := \Pp(G_k=1 \mid \hat Y_k\neq V)$ (so $\mathrm{Frac}\text{-}2A+\mathrm{Frac}\text{-}2B=1$).

\section{Information-Theoretic Framework}
\label{sec:theory}

We now formalize ``information is present but not used.'' The key objects are available information (what the hidden state knows) and used information (what the output exploits). Proofs appear in \cref{sec:proofs}.

\subsection{Available versus Used Information}

Let $H_k$ denote the model's internal state at the readout position (e.g., the final-layer residual stream). The data-generating process forms a Markov chain:
\[
V \;\to\; W_k \;\to\; H_k \;\to\; Y_k,
\]
where $V$ is the ground-truth value, $W_k$ is the prompt, $H_k$ is the hidden state, and $Y_k$ is the model's candidate-set decision.

\begin{definition}[Available and used information]
\label{def:avail_used}
We define:
\[
I_{\mathrm{avail}}(k) := \MI(V;H_k),\qquad
I_{\mathrm{used}}(k) := \MI(V;Y_k),
\]
and the \textbf{routing efficiency}:
\[
\eta_k := \frac{I_{\mathrm{used}}(k)}{I_{\mathrm{avail}}(k)}\in[0,1].
\]
\end{definition}

\begin{proposition}[Data processing]
\label{prop:dpi_eta}
For any $k$, we have $0\le \eta_k \le 1$.
\end{proposition}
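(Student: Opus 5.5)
The plan is to use the Markov chain $V \to W_k \to H_k \to Y_k$ from the start of this section together with the data processing inequality. Both bounds on $\eta_k$ then follow from two facts about mutual information: it is nonnegative, and it cannot increase under post-processing. The only delicate point is that the ratio must be well defined, i.e.\ $I_{\mathrm{avail}}(k)>0$. I would handle this with the convention that $\eta_k$ is considered only when $I_{\mathrm{avail}}(k)>0$, or set $\eta_k:=1$ (or leave it undefined) when $I_{\mathrm{avail}}(k)=0$.

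First, the lower bound. Mutual information is nonnegative because it equals the KL divergence $\KL(P_{V,Y_k}\,\|\,P_V\otimes P_{Y_k})\ge 0$. Hence $I_{\mathrm{used}}(k)\ge 0$. Since the denominator is positive under the convention above, $\eta_k\ge 0$.

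Second, the upper bound. The chain $V \to W_k \to H_k \to Y_k$ implies that $V \to H_k \to Y_k$ is also a Markov chain: $Y_k$ is a function of $H_k$ (the readout takes an argmax over $\Cset$ of logits computed from $H_k$, possibly with a randomized tie-break independent of $V$ given $H_k$), so $Y_k \perp V \mid H_k$. To apply the data processing inequality, I would expand $\MI(V;H_k,Y_k)$ with the chain rule in two ways:
\[
\MI(V;H_k)+\MI(V;Y_k\mid H_k)=\MI(V;Y_k)+\MI(V;H_k\mid Y_k).
\]
The conditional independence gives $\MI(V;Y_k\mid H_k)=0$, and $\MI(V;H_k\mid Y_k)\ge 0$. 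Together these yield $I_{\mathrm{used}}(k)\le I_{\mathrm{avail}}(k)$, and dividing by $I_{\mathrm{avail}}(k)>0$ gives $\eta_k\le 1$.

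The main obstacle is technical rather than conceptual. $H_k$ is continuous and real-valued, so the chain-rule manipulation needs mutual information defined in the general (measure-theoretic) sense. I would invoke the general data processing inequality, where $\MI$ is the supremum over finite quantizations; alternatively one can note that $V$ and $Y_k$ are discrete (valued in $\Cset$), so $I_{\mathrm{used}}(k)\le \Hh(V)<\infty$ and the identity holds with finite quantities. The degenerate case $I_{\mathrm{avail}}(k)=0$ forces $I_{\mathrm{used}}(k)=0$ by the same inequality, and the stated convention covers it.
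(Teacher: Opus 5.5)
Your proposal is correct and matches the paper's proof: both use that $Y_k$ is a function of $H_k$ to obtain the Markov chain $V\to H_k\to Y_k$, apply data processing to get $\MI(V;Y_k)\le\MI(V;H_k)$, and use nonnegativity of mutual information for the lower bound. Your chain-rule derivation of the data processing step and your explicit convention for the degenerate case $I_{\mathrm{avail}}(k)=0$, which the paper leaves implicit, are sound refinements that do not change the argument.
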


Routing efficiency captures how much of the available information the model actually exploits. Procedural hallucinations correspond to $\eta_k \ll 1$: the hidden state encodes the answer ($I_{\mathrm{avail}}$ is large), but the output ignores it ($I_{\mathrm{used}}$ is small).
In practice, $I_{\mathrm{used}}$ can be lower-bounded from candidate-set error rates via Fano, and $I_{\mathrm{avail}}$ can be lower-bounded (when activations are available) via probe performance, yielding an empirical routing-certificate summary (Appendix Table~\ref{tab:routing_mi_bounds}).

\subsection{From Error Rates to Information: Fano Bounds}

Let $M:=|\Cset|$ and define the candidate-set error rate $\varepsilon(k) := \Pp(Y_k\neq V)$.

\begin{theorem}[Fano lower bound]
\label{thm:fano}
If $V$ is uniform on $\Cset$, then:
\[
I_{\mathrm{used}}(k)
\;\ge\;
\log M\;-\;h(\varepsilon(k))\;-\;\varepsilon(k)\log(M-1),
\]
where $h(\cdot)$ is binary entropy in nats.
\end{theorem}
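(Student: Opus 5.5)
The plan is to combine the standard Fano inequality with the identity $\MI(V;Y_k)=\Hh(V)-\Hh(V\mid Y_k)$. Since $V$ is uniform on $\Cset$ with $|\Cset|=M$, we have $\Hh(V)=\log M$ (in nats, matching the convention for $h$). It therefore suffices to show
\[
\Hh(V\mid Y_k)\;\le\; h(\varepsilon(k))+\varepsilon(k)\log(M-1).
\]
Subtracting this from $\log M$ gives the claimed bound on $I_{\mathrm{used}}(k)$. The key structural fact is that $Y_k$ is, by definition, an element of $\Cset$, so $Y_k$ is itself an estimator of $V$ taking values in the same $M$-element alphabet. This is exactly the setting of Fano.

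To establish the conditional-entropy bound, I would reprove Fano briefly rather than merely cite it, to keep the argument self-contained. Introduce the error indicator $E:=\Ind\{Y_k\neq V\}$, so $\Pp(E=1)=\varepsilon(k)$. Expand $\Hh(E,V\mid Y_k)$ by the chain rule in two ways:
\[
\Hh(V\mid Y_k)+\Hh(E\mid V,Y_k)=\Hh(E\mid Y_k)+\Hh(V\mid E,Y_k).
\]
Here $\Hh(E\mid V,Y_k)=0$, since $E$ is a function of $(V,Y_k)$. Next, $\Hh(E\mid Y_k)\le \Hh(E)=h(\varepsilon(k))$, because conditioning reduces entropy. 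Finally, split $\Hh(V\mid E,Y_k)$ over the two values of $E$. On $E=0$, $V=Y_k$ is determined, contributing $0$. On $E=1$, $V$ ranges over $\Cset\setminus\{Y_k\}$, which has $M-1$ elements, so the conditional entropy is at most $\log(M-1)$. Weighting by $\Pp(E=1)=\varepsilon(k)$ gives the bound.

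The potential subtleties are minor. The first is to note that the randomness defining the mutual information comes from the data-generating distribution over prompts $W_k$ given $V$ (filler, competitors, and so on). $Y_k$ is a deterministic function of $W_k$ through $H_k$, but the argument does not need this, and it holds equally for randomized readouts. The second is the degenerate case $M=1$, where both sides are zero if we adopt $0\cdot\log 0=0$. The third is consistency of logarithm bases, which I would fix to natural logs throughout. I do not expect a genuine obstacle. The only step requiring care is the case split bounding $\Hh(V\mid E=1,Y_k)$, specifically checking that $Y_k\in\Cset$ always holds, which is what allows the $M-1$ rather than $M$. This holds because $Y_k$ is defined as the argmax over $\Cset$ rather than over $\Vocab$.
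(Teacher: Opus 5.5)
Your proposal is correct and matches the paper's proof: write $\MI(V;Y_k)=\log M-\Hh(V\mid Y_k)$ and bound $\Hh(V\mid Y_k)$ by Fano's inequality. The only difference is that you reprove Fano via the error-indicator chain rule, which is the same decomposition the paper uses for Proposition~\ref{prop:fano_slack}, whereas the paper simply cites Fano.
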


This bound says that high error implies low used information. But is the bound tight? The following proposition says yes, and characterizes exactly when.

\begin{proposition}[Minimax tightness]
\label{prop:mano_tight}
For every $M$ and $\varepsilon\in[0,1-1/M]$, the $M$-ary symmetric channel achieves equality in \cref{thm:fano}. This channel is minimax optimal: it minimizes $\MI(V;Y)$ among all channels with error rate $\varepsilon$.
\end{proposition}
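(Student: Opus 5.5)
The plan has two parts: compute $\MI(V;Y)$ for the $M$-ary symmetric channel and check it equals the Fano expression, then read off minimax optimality from \cref{thm:fano}. Throughout, $V$ is uniform on $\Cset$ with $M=|\Cset|$.

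\textbf{Step 1 (channel and its output law).} Define the $M$-ary symmetric channel with error $\varepsilon$ by
\[
\Pp(Y=v\mid V=v)=1-\varepsilon,\qquad \Pp(Y=u\mid V=v)=\tfrac{\varepsilon}{M-1}\quad (u\neq v).
\]
Every row of the transition matrix is a permutation of the same vector, so the error probability is exactly $\varepsilon$ whatever the input. Each column also sums to $(1-\varepsilon)+(M-1)\tfrac{\varepsilon}{M-1}=1$. Since $V$ is uniform, $Y$ is therefore uniform as well, and $\Hh(Y)=\log M$.

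\textbf{Step 2 (equality in Fano).} Every row of the channel has entropy
\[
h(\varepsilon)+\varepsilon\log(M-1),
\]
because the row consists of one atom of mass $1-\varepsilon$ and $M-1$ atoms of mass $\varepsilon/(M-1)$. Hence $\Hh(Y\mid V)=h(\varepsilon)+\varepsilon\log(M-1)$. Combining with Step 1,
\[
\MI(V;Y)=\Hh(Y)-\Hh(Y\mid V)=\log M-h(\varepsilon)-\varepsilon\log(M-1),
\]
which is the right-hand side of \cref{thm:fano} with equality.

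\textbf{Step 3 (minimax optimality).} By \cref{thm:fano}, any channel with $\Pp(Y\neq V)=\varepsilon$ under uniform $V$ satisfies
\[
\MI(V;Y)\ge \log M-h(\varepsilon)-\varepsilon\log(M-1).
\]
The symmetric channel attains this value by Step 2, so it minimizes $\MI(V;Y)$ over all channels with error rate $\varepsilon$.

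Two points need care. First, the restriction $\varepsilon\le 1-1/M$: this is the range on which the Fano bound is nonincreasing in $\varepsilon$ and nonnegative, reaching $0$ at $\varepsilon=1-1/M$, where the channel outputs a uniform $Y$ independent of $V$. If ``error rate $\varepsilon$'' is read as ``error at most $\varepsilon$'', this monotonicity is what extends the minimax claim. Monotonicity follows from the derivative
\[
\log\frac{\varepsilon}{1-\varepsilon}-\log(M-1)\le 0 \iff \varepsilon\le \frac{M-1}{M}.
\]
Second, the minimization is over channels with the input law fixed to uniform $V$, as in \cref{thm:fano}. The conventions ($\log$ in nats, $h(0)=0$) carry over directly.
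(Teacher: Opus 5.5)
Your proof is correct and follows essentially the paper's route: show that the $M$-ary symmetric channel attains the Fano expression with equality, then deduce minimality among channels with error rate $\varepsilon$ directly from \cref{thm:fano}. The only difference is cosmetic. The paper argues via $\Hh(V\mid Y)$ meeting Fano's upper bound, whereas you compute $\Hh(Y)-\Hh(Y\mid V)$ using the uniform output of the doubly stochastic channel, and under uniform $V$ these are equivalent.
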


When does the bound have slack? The following decomposition makes this precise.

\begin{proposition}[Fano slack decomposition]
\label{prop:fano_slack}
Under the conditions of \cref{thm:fano}, define $E=\Ind\{Y\neq V\}$. Then:
\begin{equation}\label{eq:fano_slack_decomp}
\begin{aligned}
\MI(V;Y)
&=
\underbrace{\log M - h(\varepsilon) - \varepsilon\log(M-1)}_{\text{Fano lower bound}} \\
&\quad+\underbrace{\big(h(\varepsilon)-\Hh(E\mid Y)\big)}_{\text{Jensen slack }\ge 0} \\
&\quad+\underbrace{\varepsilon\cdot\big(\log(M-1)-\Hh(V\mid Y,E=1)\big)}_{\text{non-uniform confusion slack }\ge 0}.
\end{aligned}
\end{equation}
The first slack term vanishes iff the error rate is constant across output values. The second vanishes iff, given an error, all wrong values are equally likely.
\end{proposition}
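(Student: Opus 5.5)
The plan is to write $\MI(V;Y)=\Hh(V)-\Hh(V\mid Y)$ with $\Hh(V)=\log M$ by uniformity. I will then expand $\Hh(V\mid Y)$ through the error indicator $E=\Ind\{Y\neq V\}$, mirroring the standard proof of Fano's inequality (\cref{thm:fano}) but keeping every term as an exact equality. Since $E$ is a deterministic function of $(V,Y)$, the chain rule gives $\Hh(V\mid Y)=\Hh(V,E\mid Y)=\Hh(E\mid Y)+\Hh(V\mid Y,E)$.

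Next I split the last term over $E$:
\[
\Hh(V\mid Y,E)=(1-\varepsilon)\Hh(V\mid Y,E=0)+\varepsilon\,\Hh(V\mid Y,E=1).
\]
On $E=0$ we have $V=Y$, so the first conditional entropy is zero. This yields the exact identity
\[
\MI(V;Y)=\log M-\Hh(E\mid Y)-\varepsilon\,\Hh(V\mid Y,E=1).
\]
Adding and subtracting $h(\varepsilon)$ and $\varepsilon\log(M-1)$ rearranges this into \eqref{eq:fano_slack_decomp} by pure algebra. Here $\Hh(V\mid Y,E=1)$ is understood as the conditional entropy given $Y$, averaged over $Y$ conditional on the event $E=1$. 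If $\varepsilon=0$ the term is multiplied by zero, so its definition there does not matter.

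For nonnegativity of the Jensen slack, note that $\Hh(E\mid Y)=\sum_y \Pp(Y=y)\,h(\varepsilon_y)$ with $\varepsilon_y:=\Pp(V\neq y\mid Y=y)$. Moreover $\sum_y \Pp(Y=y)\varepsilon_y=\varepsilon$. Concavity of $h$ then gives $\Hh(E\mid Y)\le h(\varepsilon)$, which is equivalently the fact that conditioning reduces entropy, $\Hh(E\mid Y)\le \Hh(E)$. Strict concavity of $h$ gives equality iff $\varepsilon_y$ is constant over $y$ with $\Pp(Y=y)>0$; this is the statement ``the error rate is constant across output values.'' Equivalently, $E$ is independent of $Y$.

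For the confusion slack: given $Y=y$ and $E=1$, $V$ ranges over the $M-1$ values $\Cset\setminus\{y\}$. Hence each conditional entropy $\Hh(V\mid Y=y,E=1)\le\log(M-1)$, with equality iff that conditional distribution is uniform. Averaging over $y$ under $\Pp(Y=y\mid E=1)$ gives $\Hh(V\mid Y,E=1)\le\log(M-1)$. When $\varepsilon>0$, equality holds iff uniformity holds for every $y$ with $\Pp(Y=y,E=1)>0$, i.e.\ all wrong values are equally likely given an error.

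The main obstacle is not the algebra but stating the equality conditions carefully. The conditions must be restricted to outputs $y$ of positive probability, and to $\Pp(Y=y,E=1)>0$ for the second slack. When $\varepsilon=0$ the second slack vanishes trivially, and the ``iff'' should be read as vacuous or conditional on $\varepsilon>0$. I would state these conventions explicitly in the proof.
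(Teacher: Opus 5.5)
Your proposal is correct and follows essentially the same route as the paper. Both use the chain rule through the deterministic error indicator $E$, the fact that $\Hh(V\mid Y,E=0)=0$, and adding and subtracting the Fano terms, then Jensen and the support-size bound for the two slacks. Your treatment of the equality cases (strict concavity of $h$ over outputs $y$ with $\Pp(Y=y)>0$, per-$y$ uniformity whenever $\Pp(Y=y,E=1)>0$, and the trivial $\varepsilon=0$ case) goes beyond the paper, which states the ``iff'' claims without proving them.
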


\begin{corollary}[Fano inversion]
\label{cor:fano_invert}
Define $\Phi_M(\varepsilon):=\log M-h(\varepsilon)-\varepsilon\log(M-1)$. Then $\MI(V;Y)\ge \Phi_M(\varepsilon)$ implies $\varepsilon \ge \Phi_M^{-1}(\MI(V;Y))$. Under the $M$-ary symmetric channel, this is an equality.
\end{corollary}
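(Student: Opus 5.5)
The plan is to reduce the corollary to monotonicity of $\Phi_M$ on the relevant interval. Then the inequality $\MI(V;Y)\ge \Phi_M(\varepsilon)$ can be inverted directly, and the equality case follows from Proposition~\ref{prop:mano_tight}.

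First I establish the monotonicity. Differentiating gives
\[
\Phi_M'(\varepsilon)=\log\frac{\varepsilon}{1-\varepsilon}-\log(M-1)=\log\frac{\varepsilon}{(1-\varepsilon)(M-1)}.
\]
This is $\le 0$ exactly when $\varepsilon\le (1-\varepsilon)(M-1)$, i.e.\ when $\varepsilon\le 1-1/M$, with equality only at the endpoint. So $\Phi_M$ is continuous and strictly decreasing on $[0,1-1/M]$. It maps this interval bijectively onto $[0,\log M]$: $\Phi_M(0)=\log M$, and $\Phi_M(1-1/M)=0$ because the $M$-ary symmetric channel at that error rate gives $Y$ independent of $V$. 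I therefore define $\Phi_M^{-1}:[0,\log M]\to[0,1-1/M]$ as the inverse of this restriction. This step also settles the convention that the corollary tacitly uses: the inverse is taken on the decreasing branch.

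Next I invert. Suppose $\varepsilon\in[0,1-1/M]$ and $\MI(V;Y)\ge\Phi_M(\varepsilon)$. Applying the decreasing map $\Phi_M^{-1}$ reverses the inequality, giving $\varepsilon=\Phi_M^{-1}(\Phi_M(\varepsilon))\ge \Phi_M^{-1}(\MI(V;Y))$. Since $V$ is uniform, $\MI(V;Y)\le \log M$, so the inverse is well-defined.

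If instead $\varepsilon>1-1/M$, the conclusion holds trivially: $\Phi_M^{-1}$ takes values in $[0,1-1/M]$, so $\Phi_M^{-1}(\MI(V;Y))\le 1-1/M<\varepsilon$. In both cases the hypothesis is supplied by Theorem~\ref{thm:fano}.

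For the equality clause, Proposition~\ref{prop:mano_tight} says the $M$-ary symmetric channel with error $\varepsilon\le 1-1/M$ satisfies $\MI(V;Y)=\Phi_M(\varepsilon)$. Applying $\Phi_M^{-1}$ to both sides gives $\varepsilon=\Phi_M^{-1}(\MI(V;Y))$ exactly.

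The only real obstacle is this bookkeeping around the domain. $\Phi_M$ is not monotone on all of $[0,1]$; it increases again past $1-1/M$, so the inverse must be specified on the decreasing branch and the case $\varepsilon>1-1/M$ handled separately. The derivative computation and the endpoint values handle both points.
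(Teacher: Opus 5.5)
Your proposal is correct and follows the route the paper leaves implicit: the paper gives no separate proof of this corollary and treats it as an immediate consequence of Theorem~\ref{thm:fano}, the monotonicity of $\Phi_M$, and the tightness in Proposition~\ref{prop:mano_tight}. Your derivative computation and your explicit handling of the branch $\varepsilon>1-1/M$, where $\Phi_M$ increases again, supply the domain bookkeeping the paper glosses over.
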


\subsection{Pseudo-Priors and Decompression Bounds}
\label{sec:pseudo_priors}

The Fano bounds connect error to used information. But how much information does the model \emph{need}? This depends on its baseline bias. A model with strong recency bias needs more evidence to overcome that bias than an unbiased model would.

We formalize this via a causal intervention that removes the binding evidence.

\begin{definition}[Pseudo-prior]
\label{def:pseudoprior}
Let $E$ denote the binding evidence (e.g., ``KEY1 = [apple]''). Define a null distribution by the intervention:
\[
\tilde W_k \sim \doop(E=\varnothing),
\]
which removes $E$ while preserving the template, candidate set, and competing cues (e.g., recency). Let $\tilde Y_k$ be the model's decision under $\tilde W_k$. The \textbf{pseudo-prior} is:
\[
\tilde p_k := \Pp(\tilde Y_k = V).
\]
\end{definition}

\subsubsection{Operationalizing $\doop(E=\varnothing)$: structure-preserving evidence ablation}
\label{sec:doe_empty_operationalization}

The intervention $\doop(E=\varnothing)$ is a \emph{design pattern}: remove the \emph{semantic support} for a binding while preserving the \emph{structure} of the instance so that differences in behavior can be attributed to the missing evidence rather than to superficial prompt changes.

\paragraph{Design goal (invariances of the null).}
Our null operator is chosen to preserve:
(i) the prompt template (role labels, span IDs, delimiters, formatting),
(ii) the query and candidate set,
and (iii)---as far as practical---length and locality statistics (so that ``distance'' and recency cues remain comparable).
This answers the ``empty string vs.\ noise vs.\ alternative binding'' ambiguity: we target \emph{structure-preserving evidence ablation}, not an arbitrary prompt corruption.

\paragraph{Operator used in experiments.}
In binding experiments, we implement $\doop(E=\varnothing)$ by removing the key--value \emph{content} while keeping the key line and delimiters (e.g., replacing \texttt{KEY1 = [apple]} with \texttt{KEY1 = [REDACTED]}).
In trace auditing (\cref{sec:cot}), we implement $\doop(E=\varnothing)$ as \emph{span scrubbing}: for a step citing spans $S_i$, we replace the \emph{contents} of those spans with a fixed placeholder (default \texttt{[REDACTED]}) while preserving span labels and delimiters, and re-run the verifier on the scrubbed prompt to obtain $p_{0,i}$.

\paragraph{What the pseudo-prior means (and why \texttt{[REDACTED]} is not ``cheating'').}
We emphasize that \texttt{[REDACTED]} is not distribution-neutral. It is an explicit marker for ``evidence removed.''
Accordingly, $\tilde p$ should be interpreted as \emph{the model/verifier's probability when explicitly denied access to that evidence}, which is exactly the counterfactual required for budgeting.
Empirically, this tends to be conservative: scrubbing usually drives $p_0$ downward (more \texttt{UNSURE}/abstain behavior), which increases the required bits-to-trust and therefore \emph{flags more}, not fewer, instances.

\paragraph{Robustness via a null family (envelope certification).}
To avoid over-committing to a single null implementation, we can define a small family of structure-preserving null operators $\mathcal{N}$ (e.g., delete-span, redact-span, same-length masking), and compute a pseudo-prior interval
\[
p_{0}^{\min} := \min_{\nu\in\mathcal{N}} p_0^{(\nu)}, \qquad
p_{0}^{\max} := \max_{\nu\in\mathcal{N}} p_0^{(\nu)}.
\]
For budget tests of the form $\KL(\ber(p_1)\|\ber(p_0)) \ge \KL(\ber(\tau)\|\ber(p_0))$, we then certify against the \emph{hardest} null (worst-case over $\nu\in\mathcal{N}$), i.e.\ require the inequality to hold for all $\nu\in\mathcal{N}$. This ``envelope'' move is directly analogous to the assumption-free, output-computable robustness layer used in compression-based ISR planners: we do not claim a single null is uniquely correct, we certify against a small, explicit family.

The pseudo-prior measures how likely the model is to guess correctly without the critical evidence. If recency bias is strong and the correct answer appeared first, $\tilde p_k$ will be small.

\begin{theorem}[Bernoulli-projected decompression bound]
\label{thm:bernoulli_projection}
Let $P$ and $Q$ be distributions with $P(A)=p$ and $Q(A)=\tilde p$ for some event $A$. Then:
\[
\KL(P\|Q) \;\ge\; \KL(\ber(p)\,\|\,\ber(\tilde p)).
\]
Moreover, for any $(p,\tilde p)$ there exists a pair $(P,Q)$ achieving equality.
\end{theorem}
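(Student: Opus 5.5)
The inequality is the data-processing inequality for KL divergence applied to a coarse-graining. Consider the deterministic map $T(\omega)=\Ind\{\omega\in A\}$, which sends $P$ to $\ber(p)$ and $Q$ to $\ber(\tilde p)$. KL divergence cannot increase under a measurable map, so $\KL(P\|Q)\ge \KL(P\circ T^{-1}\,\|\,Q\circ T^{-1})=\KL(\ber(p)\|\ber(\tilde p))$.

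To keep this self-contained, I would prove it with the log-sum inequality. Write
\[
\KL(P\|Q)=\int_A \log\frac{dP}{dQ}\,dP+\int_{A^c}\log\frac{dP}{dQ}\,dP .
\]
By the log-sum inequality (or Jensen applied to the convex function $t\log t$ under $Q$ restricted to $A$), the first integral is at least $p\log(p/\tilde p)$, and the second is at least $(1-p)\log\frac{1-p}{1-\tilde p}$. Adding the two gives the claim.

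The main care point is the degenerate cases, handled with the conventions $0\log(0/q)=0$ and $p\log(p/0)=+\infty$ for $p>0$. If $P\not\ll Q$, the left side is $+\infty$ and there is nothing to prove. If $\tilde p=0$ and $p>0$, then $P\not\ll Q$ on $A$, so both sides are $+\infty$. The same reasoning applies when $\tilde p=1$ and $p<1$. In the general measure-theoretic setting, I would apply Jensen on each cell with the conditional measures $Q(\cdot\mid A)$ and $Q(\cdot\mid A^c)$, whenever these cells have positive $Q$-mass.

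For tightness, take $P=\ber(p)$ and $Q=\ber(\tilde p)$ on the two-point space $\{0,1\}$ with $A=\{1\}$. Then $P(A)=p$, $Q(A)=\tilde p$, and equality holds trivially. This works for every pair $(p,\tilde p)$, including the cases where both sides are infinite.

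More generally, equality holds exactly when $dP/dQ$ is constant on $A$ and constant on $A^c$, which is the equality condition in Jensen. I may remark on this, but it is not needed for the theorem.
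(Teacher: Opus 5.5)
Your argument is correct and is the same as the paper's: data processing for $\KL$ under the indicator map $\omega\mapsto\Ind\{\omega\in A\}$, which you also make self-contained by applying the log-sum inequality on the two cells $A$ and $A^c$. For tightness, the paper instead takes $P$ to be the I-projection of a given $Q$ onto $\{P: P(A)=p\}$, namely $pQ(\cdot\mid A)+(1-p)Q(\cdot\mid A^c)$, which shows equality is attainable for every $Q$ and not only for the two-point pair $(\ber(p),\ber(\tilde p))$; your example already suffices for the statement as written.
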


\begin{corollary}[Bits-to-trust]
\label{cor:bits_to_trust}
To achieve success probability $p$ from pseudo-prior $\tilde p$, the model needs at least $\KL(\ber(p)\|\ber(\tilde p))$ nats.
\end{corollary}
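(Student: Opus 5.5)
The corollary follows from \cref{thm:bernoulli_projection} once "information the model needs" is made precise. The plan is to interpret it as the divergence between the model's decision distribution with evidence and its distribution under the null intervention, and then apply the data-processing inequality followed by the Bernoulli projection.

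\textbf{Setup.} Let $P$ be the law of the model's readout decision $Y_k$ (or, more generally, of the hidden state $H_k$) on the actual prompt $W_k$. Let $Q$ be the corresponding law under the pseudo-prior intervention $\tilde W_k\sim\doop(E=\varnothing)$ of \cref{def:pseudoprior}. The quantity we want to lower-bound is $\KL(P\|Q)$: the nats by which the evidence-bearing computation must move the output distribution away from its evidence-scrubbed baseline. Take the event $A=\{Y_k=V\}$ (success). By definition, $P(A)=p$ is the target success probability and $Q(A)=\tilde p_k=\tilde p$ is the pseudo-prior.

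\textbf{Main step.} \cref{thm:bernoulli_projection} then gives $\KL(P\|Q)\ge\KL(\ber(p)\|\ber(\tilde p))$ directly. If the budget is instead measured at the hidden-state level, $\KL(P_{H}\|Q_{H})$, we first apply data processing along $H_k\to Y_k$, since the readout map is the same under both interventions. This gives $\KL(P_H\|Q_H)\ge\KL(P_Y\|Q_Y)$, and then we project onto the indicator of $A$. The underlying argument in both cases is the same: the map $y\mapsto\Ind\{y=V\}$ is a deterministic channel, so KL can only decrease through it. We also handle the case where $V$ is random by conditioning on $V=v$ and applying the bound pointwise.

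\textbf{Main obstacle.} The delicate point is interpretive rather than technical. "Needs at least so many nats" must be read as a statement about this counterfactual divergence, not about $\MI(V;Y_k)$. If one wants a mutual-information version, the natural route is
\[
\MI(E;Y)=\E_E\,\KL\!\left(P_{Y\mid E}\,\middle\|\,P_Y\right),
\]
but the reference distribution there is the marginal, not the null. We therefore state the corollary with $Q$ as the null law, and note that by the equality clause of \cref{thm:bernoulli_projection} the bound is tight. Concretely, $P=\ber(p)$ and $Q=\ber(\tilde p)$ on $\{A,A^c\}$ achieves it, so no smaller budget can be certified from $(p,\tilde p)$ alone.
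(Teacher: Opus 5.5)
Your proposal is correct and matches the paper. The paper treats the corollary as an immediate instantiation of \cref{thm:bernoulli_projection}: take $P$ as the evidence-bearing law, $Q$ as the law under $\doop(E=\varnothing)$, and $A$ as the success event, so the bound is just data processing through the indicator of $A$. One small tidy-up: for random $V$ you need not condition on $V=v$, which would also require joint convexity of $\KL$ to recombine the per-$v$ bounds into one in terms of $p$ and $\tilde p$; instead, apply the theorem directly on the joint space of $(V,Y_k)$ with $A=\{Y_k=V\}$.
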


\paragraph{Concrete example.} Suppose recency bias gives the correct answer a pseudo-prior of $\tilde p = 0.05$. To achieve 90\% accuracy, the model needs at least $\KL(\ber(0.9)\|\ber(0.05)) \approx 2.25$ nats $\approx 3.2$ bits of evidence. This is the ``bits-to-trust'' cost of overcoming the bias.

\subsection{Certifying ``Present but Not Used''}

Estimating $I_{\mathrm{avail}} = \MI(V;H_k)$ requires access to hidden states. But we can certify routing failure without a tight estimate.

\begin{proposition}[Certification via probing]
\label{prop:unused_info}
For any probe $f$:
\[
\MI(V;H_k)-\MI(V;Y_k)\;\ge\;\MI(V;f(H_k))-\MI(V;Y_k).
\]
\end{proposition}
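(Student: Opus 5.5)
Subtracting $\MI(V;Y_k)$ from both sides, the inequality is equivalent to $\MI(V;H_k)\ge \MI(V;f(H_k))$. So the plan is simply to prove this reduced inequality, which is an instance of the data processing inequality already used in Proposition~\ref{prop:dpi_eta}.

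\textbf{Step 1: the Markov chain.} For a fixed (deterministic, measurable) probe $f$, the random variable $f(H_k)$ is a function of $H_k$ alone. Hence $V\to H_k\to f(H_k)$ is a Markov chain: conditional on $H_k$, $f(H_k)$ is constant and therefore independent of $V$.

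\textbf{Step 2: chain rule.} Expand $\MI(V;H_k,f(H_k))$ in two ways:
\[
\MI(V;H_k)+\MI(V;f(H_k)\mid H_k)=\MI(V;f(H_k))+\MI(V;H_k\mid f(H_k)).
\]
On the left, the first expansion equals $\MI(V;H_k)$, since adding a function of $H_k$ adds no information. By Step~1, $\MI(V;f(H_k)\mid H_k)=0$. Nonnegativity of conditional mutual information then gives $\MI(V;H_k)\ge \MI(V;f(H_k))$.

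\textbf{Step 3: conclude.} Subtract $\MI(V;Y_k)$ from both sides. This is legitimate whenever the quantities are finite, which holds here because $V$ ranges over the finite set $\Cset$, so every term is at most $\log M$.

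The only point needing care is the case where $H_k$ is continuous, as with a residual-stream vector. There the chain-rule identities should be read with mutual information defined as the supremum over finite partitions (or via densities). Data processing still holds in that setting. Alternatively, one can avoid conditional mutual information entirely: any finite partition of the range of $f(H_k)$ pulls back to a finite partition of the range of $H_k$, so the supremum defining $\MI(V;f(H_k))$ is taken over a subfamily of the partitions used for $\MI(V;H_k)$. I would state this measure-theoretic remark briefly rather than belabor it.

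If randomized probes are allowed, the argument is the same. The probe's internal randomness is independent of $(V,W_k)$, so the Markov chain $V\to H_k\to f(H_k)$ still holds and Steps~2 and~3 go through unchanged. Overall, the main "obstacle" is only this technical remark; the argument itself is a one-line application of data processing.
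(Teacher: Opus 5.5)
Your proposal is correct and follows the paper's own proof: reduce to $\MI(V;f(H_k))\le \MI(V;H_k)$ by data processing along $V\to H_k\to f(H_k)$, then subtract $\MI(V;Y_k)$. The chain-rule derivation, the finiteness check, and the remarks on continuous $H_k$ and randomized probes are extra rigor the paper omits, and they do not change the argument.
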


If a linear probe achieves higher mutual information with $V$ than the model's own output does, then the model is provably failing to use available information. We do not need to estimate $I_{\mathrm{avail}}$ exactly; we only need a probe that outperforms the model.

\subsection{Distance Dependence}
\label{sec:sdpi_main}

Why do binding failures worsen with distance? We formalize this via strong data processing inequalities (SDPI).

\begin{definition}[SDPI coefficient]
For a channel $K$ mapping distributions on $\mathcal{X}$ to distributions on $\mathcal{X}'$:
\[
\alpha(K)
:= \sup_{P\neq Q}\ \frac{\KL(PK\,\|\,QK)}{\KL(P\,\|\,Q)} \;\in\; [0,1].
\]
\end{definition}

\begin{theorem}[SDPI contraction]
\label{thm:sdpi}
For a Markov chain $U\to X\to X'$ with channel $K$:
\[
\MI(U;X') \;\le\; \alpha(K)\,\MI(U;X).
\]
For a chain $V\to S_0\to S_1\to \cdots \to S_k\to H_k$:
\[
\MI(V;H_k)\ \le\ \Big(\prod_{t=1}^{k} \alpha(K_t)\Big)\,\MI(V;S_0).
\]
\end{theorem}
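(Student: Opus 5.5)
The first inequality is a statement about mutual information, but $\alpha(K)$ is defined through KL divergences of input distributions. I would bridge the two with the standard representation of mutual information as an average KL divergence, and then obtain the chain bound by induction.

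\textbf{Step 1: rewrite mutual information as an average divergence.} Let $P_X$ be the marginal of $X$ and $P_{X\mid U=u}$ the conditional law. Then
\[
\MI(U;X)=\E_{u\sim P_U}\big[\KL(P_{X\mid U=u}\,\|\,P_X)\big].
\]
Because $U\to X\to X'$ is Markov with channel $K$, the corresponding laws of $X'$ are $P_{X'\mid U=u}=P_{X\mid U=u}K$ and $P_{X'}=P_XK$. The same identity therefore gives
\[
\MI(U;X')=\E_{u\sim P_U}\big[\KL(P_{X\mid U=u}K\,\|\,P_XK)\big].
\]

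\textbf{Step 2: apply the SDPI coefficient pointwise.} For each $u$ with $P_{X\mid U=u}\neq P_X$, the definition of $\alpha(K)$ gives
\[
\KL(P_{X\mid U=u}K\,\|\,P_XK)\le \alpha(K)\,\KL(P_{X\mid U=u}\,\|\,P_X).
\]
If $P_{X\mid U=u}=P_X$, both sides are zero and the inequality is trivial. If $\KL(P_{X\mid U=u}\|P_X)=\infty$, the inequality also holds trivially, with the convention $\alpha(K)\cdot\infty=\infty$ when $\alpha(K)>0$. Averaging over $u\sim P_U$ then yields $\MI(U;X')\le\alpha(K)\,\MI(U;X)$.

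\textbf{Step 3: induct along the chain.} I would apply the single-step bound with $U=V$ at each link. For $t=1,\dots,k$, the process $V\to S_{t-1}\to S_t$ is Markov because the whole chain is Markov. This gives $\MI(V;S_t)\le\alpha(K_t)\,\MI(V;S_{t-1})$, and iterating yields $\MI(V;S_k)\le\big(\prod_{t=1}^{k}\alpha(K_t)\big)\MI(V;S_0)$. For the final link $S_k\to H_k$, the ordinary data processing inequality (the same fact behind Proposition~\ref{prop:dpi_eta}) gives $\MI(V;H_k)\le \MI(V;S_k)$. Equivalently, this is the SDPI bound with a coefficient at most $1$, and it introduces no extra factor in the product.

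\textbf{Main obstacle.} I expect the delicate point to be measure-theoretic rather than conceptual. The identity in Step 1 must hold for general alphabets, which requires regular conditional distributions and either finite mutual information or the conventions above. The channel $K$ must also act identically on every conditional law, which is exactly what the Markov property supplies. For the finite alphabets used in the paper (candidate sets, token sequences, quantized states), all of this is elementary, so I would state that setting and note that the general case follows by the same argument with disintegrations.
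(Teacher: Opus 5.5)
Your proof is correct and follows the paper's argument: write $\MI(U;X)$ as the average of $\KL(P_{X\mid U=u}\,\|\,P_X)$, push both laws through $K$, apply the definition of $\alpha(K)$ pointwise, average, and iterate along the chain. You also use the ordinary data processing inequality for the final link $S_k\to H_k$, which contributes no factor to the product; the paper's ``by iteration'' leaves that step implicit, and you handle it correctly.
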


Information decays geometrically with distance. This is not merely an upper bound; for ``copy-or-noise'' channels, the decay is exact (\cref{prop:copy_noise} in Appendix).

\paragraph{Implication for checkpointing.} Checkpointing injects fresh copies of the evidence, resetting the distance. This increases $I_{\mathrm{avail}}(k)$ and thus the attainable accuracy.

\section{Empirical Results}
\label{sec:empirical}

Our experiments test three predictions of the framework:
\begin{enumerate}
\item Stage~2B errors (wrong candidate) should dominate over Stage~2A errors (no candidate) in binding tasks.
\item On Stage~2B error trials, probes should recover the correct answer at above-chance rates, certifying ``present but not used.''
\item Checkpointing should recover accuracy by shortening the effective evidence distance.
\end{enumerate}
We also localize the failure mechanistically via activation patching. Full protocols appear in \cref{sec:protocols}.

\begin{figure*}[t]
\centering
\includegraphics[width=0.98\textwidth]{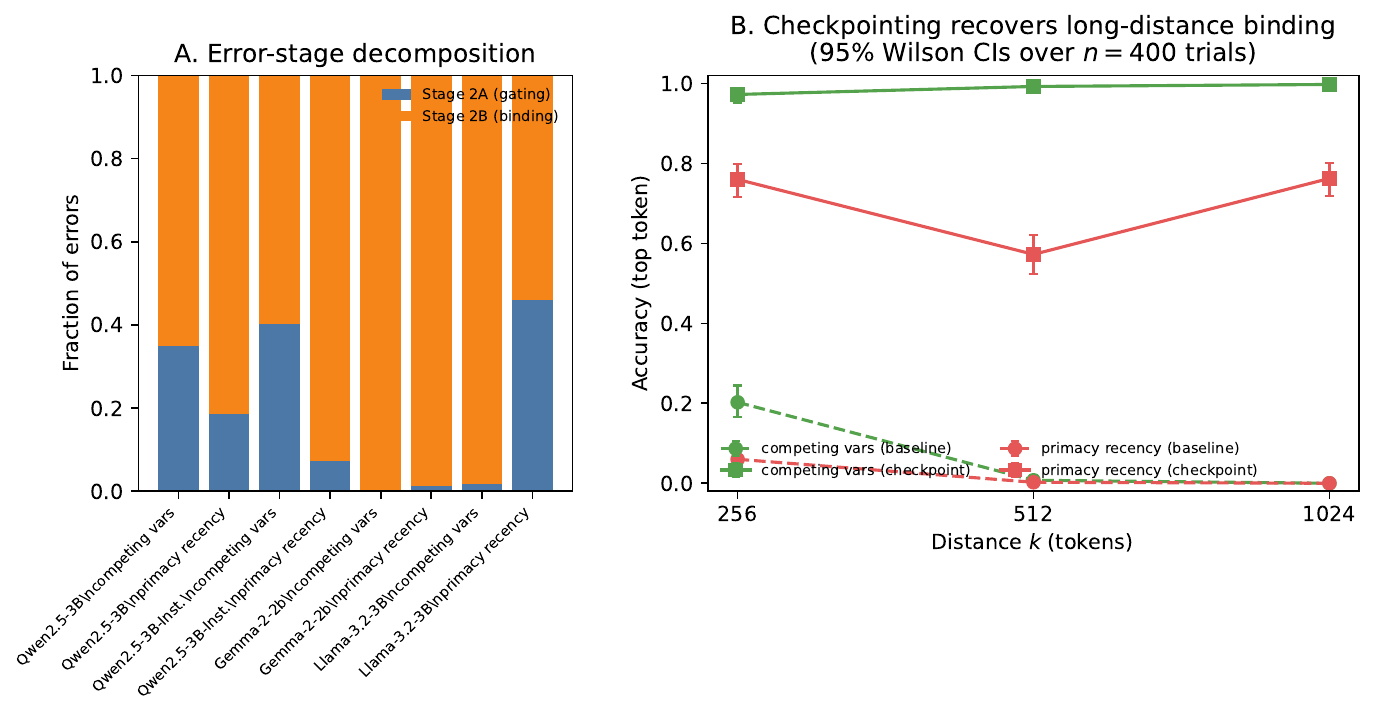}
\caption{\textbf{Spotlight summary of core empirical claims.} \textbf{(A)} Stage decomposition for representative hard-regime settings: most errors are Stage~2B (binding) rather than Stage~2A (gating), i.e., the model enters answer mode but selects the wrong candidate. \textbf{(B)} Checkpointing (restating the true binding every 128 tokens near the query) substantially recovers long-distance binding for Qwen2.5-3B; on \textsc{competing\_vars} at $k=1024$, it converts 0/400$\to$399/400 correct. Error bars are 95\% Wilson binomial confidence intervals over $n=400$ trials per cell.}
\label{fig:spotlight_results}
\end{figure*}

\paragraph{Models and tasks.}
For stage decomposition and probing, we evaluate Qwen2.5-3B (and Qwen2.5-3B-Instruct), Llama-3.2-3B, and Gemma-2-2b.
For checkpointing and scaling controls, we additionally evaluate Llama-3.2-3B-Instruct and Gemma-2-2b-it.
We study \textsc{competing\_vars} and \textsc{primacy\_recency} tasks with distances $k \in \{256, 512, 1024, 2048\}$ tokens.
Sample sizes are $n=800$ for stage decomposition and probing, $n=400$ for baseline comparisons and checkpointing, and $n\approx 160$ for patching (paired clean/corrupt runs).

\subsection{Prediction 1: Stage~2B Dominates}

Table~\ref{tab:stage2ab_multimodel} shows the stage decomposition across models and tasks. In \textsc{competing\_vars}, Stage~2B accounts for 65--100\% of errors depending on model and distance. The model enters answer mode but selects the wrong candidate.

\begin{table*}[t]
\centering
{\small
\setlength{\tabcolsep}{3pt}
\renewcommand{\arraystretch}{1.15}
\begin{tabular*}{\textwidth}{@{\extracolsep{\fill}}llrrrrrrr@{}}
\toprule
Model & Task & $k$ & Acc & \shortstack{Cand\\Acc} & \shortstack{Frac\\2A} & \shortstack{Frac\\2B} & GateGap & ValueGap\\
\midrule
Qwen2.5-3B & \textsc{competing} & 256 & 0.107 & 0.158 & 0.350 & 0.650 & 0.56 & -1.81 \\
Qwen2.5-3B & \textsc{primacy} & 256 & 0.043 & 0.045 & 0.185 & 0.815 & 0.96 & -3.40 \\
Qwen2.5-3B-Inst. & \textsc{competing} & 256 & 0.276 & 0.360 & 0.402 & 0.598 & 0.61 & -0.77 \\
Qwen2.5-3B-Inst. & \textsc{primacy} & 256 & 0.061 & 0.059 & 0.072 & 0.928 & 2.10 & -3.96 \\
Gemma-2-2B & \textsc{competing} & 256 & 0.995 & 0.995 & 0.000 & 1.000 & 3.03 & 2.28 \\
Gemma-2-2B & \textsc{primacy} & 256 & 0.801 & 0.804 & 0.013 & 0.987 & 1.65 & 0.47 \\
Llama-3.2-3B & \textsc{competing} & 2048 & 0.385 & 0.369 & 0.018 & 0.982 & 2.01 & -0.28 \\
Llama-3.2-3B & \textsc{primacy} & 2048 & 0.001 & 0.014 & 0.461 & 0.539 & -0.08 & -5.14 \\
\bottomrule
\end{tabular*}
}
\caption{Stage decomposition ($n=800$, greedy next-token decoding; bracket prompt style; \texttt{decoy\_heavy} filler; $|\Cset|\approx 50$ single-token candidates). Acc is $\Pp(\hat Y=V)$ (top token), Cand-Acc is $\Pp(Y=V)$ (best candidate). \textbf{Frac-2A/Frac-2B are fractions of errors} (conditional on $\hat Y\neq V$), decomposing errors into gating (non-candidate output) vs.\ binding (wrong candidate) failures. For Llama we report $k=2048$ to reach comparable difficulty.}
\label{tab:stage2ab_multimodel}
\end{table*}

Baseline accuracy shows a robustness hierarchy: Gemma $>$ Llama $\gg$ Qwen, with Qwen collapsing by $k=256$ on both tasks and all models eventually failing on \textsc{primacy\_recency} at sufficient distance (Appendix Table~\ref{tab:baseline_comparison}).

\subsection{Prediction 2: Probes Certify Information Presence}

On trials where the model outputs the wrong candidate (Stage~2B errors), we train a linear probe on the final-layer residual stream to predict the correct answer. Table~\ref{tab:probe_stage2b} shows the results.

\begin{table*}[t]
\centering
{\small
\setlength{\tabcolsep}{4pt}
\renewcommand{\arraystretch}{1.15}
\begin{tabular*}{\textwidth}{@{\extracolsep{\fill}}lrrrrrrr@{}}
\toprule
Task & $k$ & Acc & \shortstack{Cand\\Acc} & \shortstack{Frac\\2B} & Probe@0 & Probe@18 & Probe@35\\
\midrule
\textsc{competing} & 256 & 0.200 & 0.255 & 0.641 & 0.019 & 0.083 & 0.739 \\
\textsc{competing} & 512 & 0.000 & 0.000 & 0.968 & 0.008 & 0.029 & 0.390 \\
\textsc{primacy} & 256 & 0.069 & 0.074 & 0.754 & 0.020 & 0.040 & 0.354 \\
\textsc{primacy} & 512 & 0.000 & 0.000 & 0.994 & 0.020 & 0.020 & 0.124 \\
\bottomrule
\end{tabular*}
}
\caption{Probing on Stage~2B errors (Qwen2.5-3B, $n=800$; bracket prompt style; \texttt{decoy\_heavy} filler; $|\Cset|\approx 50$). \textbf{Frac-2B is a fraction of errors} (conditional on $\hat Y\neq V$) attributable to Stage~2B binding failures. Layers 0/18/35 are embedding, mid, and final residual streams. Chance is $\approx$2\%. At $k=256$, the final-layer probe recovers the correct answer on 74\% of error trials.}
\label{tab:probe_stage2b}
\end{table*}
	
	At $k=256$ in \textsc{competing\_vars}, the final-layer probe achieves 0.739 accuracy, which is 37$\times$ above chance, even though the model output the wrong answer. This shows that the correct value remains decodable from the hidden state on error trials, supporting the ``present but not used'' hypothesis.
	
	We can make the magnitude of this gap more concrete by converting accuracies into the Fano expression in \cref{thm:fano} (with $|\Cset|\approx 50$).
	In this regime, Cand-Acc $=0.255$ implies $I_{\mathrm{used}}\gtrsim 0.45$ nats. For intuition, applying the same expression to the final-layer probe's 0.739 accuracy on Stage~2B errors yields $\approx 2.32$ nats; since this is conditioned on errors (and thus may change the entropy of $V$), we treat it as an interpretable proxy rather than a formal bound (Appendix Table~\ref{tab:routing_mi_bounds}).

\subsection{Mechanistic Localization: Activation Patching}

We localize the routing failure using activation patching. For each layer and component (attention vs.\ MLP), we patch activations from a ``clean'' run (where the model would be correct) into a ``corrupt'' run (where it errs) and measure how much the correct answer's logit margin recovers.

Across three model families, we observe a consistent motif: late attention layers restore correct bindings, while late MLPs corrupt them (complete layer-component results in Appendix Table~\ref{tab:full_layers}). We also identify model-specific heads; notably, Gemma exhibits an ``anti-recency'' head that counterbalances a misbinding head (Appendix Table~\ref{tab:gemma_anti_recency_head}).

\subsection{Prediction 3: Checkpointing Recovers Accuracy}

To test whether failures reflect distance rather than incapacity, we introduce oracle checkpointing: restating bindings every 128 tokens. Table~\ref{tab:checkpoint_results} shows dramatic recovery.
To guard against an alternative explanation (format repetition rather than content-specific routing), our checkpointing suite also defines sham and wrong-checkpoint controls (Appendix~\ref{sec:checkpoint_controls}); in this paper we report oracle checkpointing results.

\begin{table*}[t]
\centering
{\small
\setlength{\tabcolsep}{6pt}
\renewcommand{\arraystretch}{1.15}
\begin{tabular*}{\textwidth}{@{\extracolsep{\fill}}llrrr@{}}
\toprule
Model & Task & $k$ & Baseline & +Checkpoint \\
\midrule
Qwen2.5-3B & \textsc{competing} & 1024 & 0.000 & \textbf{0.998} \\
Qwen2.5-3B & \textsc{primacy} & 1024 & 0.000 & \textbf{0.763} \\
Llama-3.2-3B-Inst. & \textsc{competing} & 2048 & 0.000 & \textbf{0.853} \\
Llama-3.2-3B-Inst. & \textsc{primacy} & 1024 & 0.003 & \textbf{0.915} \\
Gemma-2-2B-it & \textsc{primacy} & 1024 & 0.440 & \textbf{0.968} \\
\bottomrule
\end{tabular*}
}
\caption{Checkpointing summary ($n=400$; greedy next-token decoding; bracket prompt style; \texttt{decoy\_heavy} filler; $|\Cset|\approx 50$). Oracle checkpointing inserts a compact serialization of the true binding every 128 tokens in the tail filler, and can convert near-zero accuracy into near-perfect accuracy at long distance; see Appendix Table~\ref{tab:checkpoint_results_full} for full results. Tasks are \textsc{competing\_vars} (\textsc{competing}) and \textsc{primacy\_recency} (\textsc{primacy}).}
\label{tab:checkpoint_results}
\end{table*}

Qwen recovers from 0\% to 99.8\% accuracy at $k=1024$. The one exception is Llama at $k=2048$ on \textsc{primacy\_recency}, where checkpointing fails (0\% $\to$ 0.3\%). Inspection reveals that Stage~2A gating has collapsed: the model no longer enters answer mode. This confirms the stagewise picture: Stage~2B failures are distance-limited and recoverable; Stage~2A failures at extreme distance are a distinct breakdown.

\paragraph{Naturalistic binding at 8B--9B scale.}
We also evaluate an option-randomized \texttt{notes\_binding} task at 8B--9B scale.
Checkpointing partially recovers Meta-Llama-3.1-8B-Instruct at long distance and yields near-perfect performance for Gemma-2-9b-it (Appendix Table~\ref{tab:notes_binding_scale}).

\paragraph{Extension: audited reasoning traces.}
We extend pseudo-priors to chain-of-thought auditing by scrubbing cited evidence and comparing required vs.\ observed information budgets; details and full results are in Appendix~\cref{sec:cot}.

\paragraph{Toolkit and reproducibility.}
An anonymized reproducibility package (scripts + cached outputs) is included in the supplemental material and is sufficient to regenerate the reported tables and Figure~\ref{fig:spotlight_results}; see Appendix~\cref{sec:toolkit} for commands and environment notes.

\section{Discussion}
\label{sec:discussion}

\paragraph{Procedural versus factual hallucination.}
Our results suggest that many structured-generation failures are not about missing knowledge but about \emph{mis-commitment}. The model encodes the correct answer (probes recover it) but routes its output toward a biased competitor. This is qualitatively different from factual hallucination and requires different interventions.

\paragraph{Why the theory gives more than necessary conditions.}
Our theory uses standard information-theoretic tools, but aims to make them actionable for diagnosing LLM failures. Fano's inequality is a necessary condition: low information implies high error. Propositions~\ref{prop:mano_tight} and~\ref{prop:fano_slack} characterize when this necessary condition is close to sufficient (error rates near the minimax channel) and provide a measurable slack decomposition. The Bernoulli decompression bound (\cref{thm:bernoulli_projection}) is a tight specialization that yields closed-form ``bits-to-trust'' costs.

\paragraph{Mitigation strategies.}
Our results point to two levers. First, \emph{increase availability}: checkpointing, retrieval, and shorter contexts all increase $I_{\mathrm{avail}}$. Second, \emph{increase routing efficiency}: the patching results suggest that late MLPs are a bottleneck. Gemma's anti-recency head shows that architectural solutions exist; whether they can be trained or induced is an open question.

\section{Limitations}
\label{sec:limitations}

Our mechanistic claims (probing, patching) require activation access and thus apply directly only to open-weight models. For hosted APIs, we provide output-level and trace-level diagnostics but cannot estimate $I_{\mathrm{avail}}$ (and hence $\eta$) without activations; in this paper we therefore emphasize bounds and certificates derived from error rates and probes.

Trace auditing detects certificate failures, that is, claims that lack sufficient evidential support, but does not prove faithfulness to the model's hidden chain-of-thought. A model could produce a valid-looking trace via post-hoc rationalization.

Our experiments focus on synthetic binding tasks designed to isolate the phenomena. Whether the same mechanisms explain failures in naturalistic long-context tasks (e.g., document QA) remains to be validated.

\section{Broader Impact and Ethics}
\label{sec:broader_impact}

\paragraph{Potential positive impacts.}
Procedural hallucinations can undermine trust in systems that rely on long-context reasoning and tool-augmented generation. By decomposing failures into interpretable readout stages and providing concrete diagnostics, this work may help practitioners (i) detect when a model ``knows but does not use'' needed information, (ii) design mitigations such as checkpointing or answer-gating, and (iii) audit reasoning traces for unsupported claims.

\paragraph{Potential negative impacts and misuse.}
Diagnostic tools for model behavior can potentially be repurposed to find brittle points or to craft adversarial prompts that exploit model biases. We mitigate this by releasing no new model weights and by focusing on synthetic tasks and analysis code; users should apply the toolkit responsibly and in accordance with the underlying model and API terms of use.

\paragraph{Privacy and human subjects.}
Our experiments use synthetic prompts and do not involve human subjects or personal data. When using the optional API-based trace-auditing components, practitioners should avoid sending sensitive data to third-party services unless permitted and appropriate.

\section{Conclusion}

We have presented a rigorous framework for understanding procedural hallucinations: failures where the model possesses information but does not use it. The framework decomposes errors into gating (Stage~2A) and binding (Stage~2B) failures, formalizes ``present but not used'' via information-theoretic routing efficiency, provides tight bounds connecting error rates to information budgets, and extends to auditing reasoning traces.

Empirically, Stage~2B errors dominate in binding tasks. Probes certify that correct information is encoded on error trials. Activation patching localizes failures to late MLPs. Checkpointing can substantially recover long-distance binding by shortening the evidence path.

An anonymized toolkit and reproducibility package are included in the supplemental material, and we plan to release them publicly upon acceptance.

\clearpage
\bibliography{references}
\bibliographystyle{icml2026}

\onecolumn
\appendix

\section{Strawberry Counting Analysis}
\label{sec:strawberry_appendix}

Prompt: ``How many r's are in `strawberry'?'' The correct word-level statistic is $W=3$. For long traces, the final readout misbinds to competing statistics: WORD (intended), TOTAL (total count in the full trace), or SUFFIX (a recent suffix window). Empirically, baseline misbinds for $k>10$: at $k=20$ the final output is 6 (TOTAL), and at $k=30$ it is 8 (SUFFIX); larger $k$ yields TOTAL. A binding intervention that forces the final to match the word-level statistic returns final $=3$ for all tested $k$ with perfect trace fidelity (character match and run-count consistency both 1.00).
\section{Proofs}
\label{sec:proofs}

\subsection{Proof of \cref{prop:dpi_eta}}
\begin{proof}
Since $Y_k$ is a function of $H_k$, we have the Markov chain $V\to H_k \to Y_k$. By data processing, $\MI(V;Y_k)\le \MI(V;H_k)$. Nonnegativity of mutual information yields $0\le \eta_k\le 1$.
\end{proof}

\subsection{Proof of \cref{thm:fano} and \cref{prop:mano_tight}}
\begin{proof}[Proof sketch]
For uniform $V$ on $\Cset$, $H(V)=\log M$. Fano's inequality gives $H(V\mid Y)\le h(\varepsilon)+\varepsilon\log(M-1)$. Therefore $\MI(V;Y)=H(V)-H(V\mid Y)\ge \log M - h(\varepsilon)-\varepsilon\log(M-1)$.

Tightness: the $M$-ary symmetric channel has $P(Y=V)=1-\varepsilon$ and $P(Y=y\ne V)=\varepsilon/(M-1)$. For this channel, $H(V\mid Y)$ attains the Fano upper bound. Minimax optimality follows because the symmetric channel maximizes $H(V\mid Y)$ given $\varepsilon$.
\end{proof}

\subsection{Proof of \cref{prop:fano_slack}}
\begin{proof}
Let $E=\Ind\{Y\neq V\}$. Since $E$ is deterministic given $(V,Y)$, we have $\Hh(E\mid V,Y)=0$. By chain rule:
\[
\Hh(V\mid Y)=\Hh(E,V\mid Y)-\Hh(E\mid V,Y)=\Hh(E\mid Y)+\Hh(V\mid Y,E).
\]
Since $\Hh(V\mid Y,E=0)=0$ (when correct, $V=Y$), we have $\Hh(V\mid Y,E)=\varepsilon\,\Hh(V\mid Y,E=1)$. Thus:
\[
\Hh(V\mid Y)=\Hh(E\mid Y)+\varepsilon\,\Hh(V\mid Y,E=1).
\]
Using $\MI(V;Y)=\log M-\Hh(V\mid Y)$ and adding/subtracting Fano bounds yields \eqref{eq:fano_slack_decomp}. Nonnegativity: $\Hh(E\mid Y)\le h(\varepsilon)$ by Jensen; $\Hh(V\mid Y,E=1)\le \log(M-1)$ by support size.
\end{proof}

\subsection{Proof of \cref{thm:sdpi}}
\begin{proof}
Express mutual information as average KL:
\[
\MI(U;X)=\E_{u}\big[\KL(P(X\mid U=u)\,\|\,P(X))\big].
\]
Since $X\mapsto X'$ is channel $K$: $P(X'\mid U=u)=P(X\mid U=u)K$ and $P(X')=P(X)K$. By definition of $\alpha(K)$:
\[
\KL(P(X'\mid U=u)\,\|\,P(X'))\le \alpha(K)\,\KL(P(X\mid U=u)\,\|\,P(X)).
\]
Averaging yields $\MI(U;X')\le \alpha(K)\MI(U;X)$. Chain bound follows by iteration.
\end{proof}

\subsection{Proof of \cref{thm:bernoulli_projection}}
\begin{proof}[Proof sketch]
Let $f(\omega)=\Ind\{\omega\in A\}$. By data processing for KL:
\[
\KL(P\|Q)\ge \KL(P\circ f^{-1}\,\|\,Q\circ f^{-1}) = \KL(\ber(p)\|\ber(\tilde p)).
\]
Tightness: choose $P$ as the I-projection of $Q$ onto $\{P: P(A)=p\}$.
\end{proof}

\subsection{Proof of \cref{prop:unused_info}}
\begin{proof}
By data processing, $\MI(V;f(H_k))\le \MI(V;H_k)$. Subtracting $\MI(V;Y_k)$ yields the result.
\end{proof}

\begin{proposition}[Exact contraction for copy-or-noise]
\label{prop:copy_noise}
For the copy-or-noise channel with parameter $\alpha$ (copies input with probability $\alpha$, else outputs noise from $\nu$):
\[
\MI(U;X') = \alpha\,\MI(U;X).
\]
\end{proposition}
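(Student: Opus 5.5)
The plan is to introduce the latent \emph{copy indicator} $B\sim\ber(\alpha)$ and write the channel as $X' = X$ if $B=1$ and $X'=Z$ if $B=0$. Here $Z\sim\nu$, and $B$ and $Z$ are drawn independently of each other and of $(U,X)$. The whole argument is a chain-rule computation on the augmented pair $(X',B)$, followed by a step that removes $B$ again.

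First, I will expand $\MI(U;X',B)$ in two ways.
\begin{itemize}
\item Since $B\perp U$, we have $\MI(U;B)=0$, so $\MI(U;X',B)=\MI(U;X'\mid B)$.
\item Conditioning on $B$ gives $\MI(U;X'\mid B)=\alpha\,\MI(U;X'\mid B=1)+(1-\alpha)\,\MI(U;X'\mid B=0)$.
\item On $\{B=1\}$ we have $X'=X$, and independence of $B$ from $(U,X)$ shows that conditioning on $B=1$ leaves the joint law of $(U,X)$ unchanged. Hence $\MI(U;X'\mid B=1)=\MI(U;X)$.
\item On $\{B=0\}$ we have $X'=Z\perp U$, so $\MI(U;X'\mid B=0)=0$.
\end{itemize}
Together these give $\MI(U;X',B)=\alpha\,\MI(U;X)$.

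Second, I will relate $\MI(U;X')$ to $\MI(U;X',B)$. The chain rule gives
\[
\MI(U;X',B)=\MI(U;X')+\MI(U;B\mid X').
\]
It follows that $\MI(U;X')\le\alpha\,\MI(U;X)$ in general, which matches the SDPI bound of \cref{thm:sdpi} with coefficient $\alpha$. Equality holds exactly when $\MI(U;B\mid X')=0$. This is guaranteed whenever $B$ is a deterministic function of $X'$, namely when noise outputs are distinguishable from copied outputs. The canonical case is $\nu$ supported on symbols disjoint from the alphabet of $X$, such as an erasure symbol, so that $\Hh(B\mid X')=0$.

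The main obstacle is this identifiability step. If $\nu$ overlaps the support of $X$, the observer cannot tell copies from noise, and in general $\MI(U;X')$ is strictly smaller than $\alpha\,\MI(U;X)$. A simple example is binary $X$ with uniform noise on the same alphabet. I would therefore state the proposition under the convention that the noise output is flagged or has support disjoint from that of $X$, i.e.\ the erasure form of copy-or-noise. Under that convention the equality is exact, and it shows that the contraction factor $\alpha$ in \cref{thm:sdpi} is attained. For the overlapping case I would record only the inequality.
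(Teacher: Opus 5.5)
Your proof is correct and follows the paper's skeleton: introduce the copy indicator $B\sim\ber(\alpha)$ independent of $(U,X)$, then split $\MI(U;X'\mid B)$ into $\alpha\,\MI(U;X)+(1-\alpha)\cdot 0$. The difference is the step you add, and it is substantive.

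The paper's proof goes from $B\perp U$ directly to $\MI(U;X')=\MI(U;X'\mid B)$. Independence only gives $\MI(U;X'\mid B)=\MI(U;X',B)=\MI(U;X')+\MI(U;B\mid X')$, so the paper silently drops the term $\MI(U;B\mid X')$. Your chain-rule step exposes this term, and your counterexample is genuine. Take $U=X$ uniform on $\{0,1\}$, $\nu$ uniform on $\{0,1\}$, and $\alpha=1/2$. The channel is then a binary symmetric channel with crossover $1/4$, so $\MI(U;X')=\log 2-h(1/4)\approx 0.13$ nats, while $\alpha\,\MI(U;X)=\tfrac12\log 2\approx 0.35$ nats.

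So the proposition as literally stated, for arbitrary $\nu$, is false. Your version, with the noise flagged (an erasure symbol, or $\nu$ supported off the alphabet of $X$), is the correct statement, and for it your argument is complete. For general $\nu$, the corrected decomposition yields only the inequality $\MI(U;X')\le\alpha\,\MI(U;X)$, consistent with Theorem~\ref{thm:sdpi}.

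If you want the sharp condition rather than a sufficient one, take $0<\alpha<1$ on discrete alphabets. Given $X'=x$ with $x$ charged by both $P_X$ and $\nu$, the conditional law of $U$ is $P_{U\mid X=x}$ on $\{B=1\}$ and $P_U$ on $\{B=0\}$. Hence $\MI(U;B\mid X')=0$, and so equality, holds iff $P_{U\mid X=x}=P_U$ for every $x\in\mathrm{supp}(P_X)\cap\mathrm{supp}(\nu)$. Disjoint supports is the clean case in which this condition is vacuous.
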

\begin{proof}
Let $B\sim\ber(\alpha)$ be independent of $(U,X)$ and $Z\sim \nu$ be independent noise. Since $B$ is independent of $U$:
\[
\MI(U;X')=\MI(U;X'\mid B)=\alpha\,\MI(U;X'\mid B=1)+(1-\alpha)\,\MI(U;X'\mid B=0).
\]
When $B=0$, $X'=Z$ independent of $U$, so $\MI(U;X'\mid B=0)=0$. When $B=1$, $X'=X$, so $\MI(U;X'\mid B=1)=\MI(U;X)$.
\end{proof}

\section{Audited Reasoning Traces}
\label{sec:cot}

We extend the framework to reasoning traces. The model produces a chain-of-thought, and we audit whether each step is supported by the evidence it cites.

\subsection{Trace Format}

We prompt the model to produce a structured trace $T = \{(c_i, S_i, o_i)\}_{i=1}^T$ where $c_i$ is an atomic claim, $S_i$ is a set of cited span identifiers, and $o_i$ is an optional confidence. A separate verifier labels each step as \texttt{ENTAILED}, \texttt{CONTRADICTED}, \texttt{NOT\_IN\_CONTEXT}, or \texttt{UNVERIFIABLE}. An executor then derives the answer using only verified steps.

\subsection{Trace-Level Pseudo-Priors}

For each step $i$, we define a pseudo-prior by scrubbing its cited spans:
\[
\tilde W^{(i)} := \doop(\text{spans in }S_i := \texttt{[REDACTED]}).
\]
Let $p_{0,i}$ be the verifier's probability that step $i$ is entailed under $\tilde W^{(i)}$, and $p_{1,i}$ be the probability under the full context. We define:
\[
\mathrm{ReqBits}_i := \KL(\ber(o_i)\,\|\,\ber(p_{0,i})),\qquad
\mathrm{ObsBits}_i := \KL(\ber(p_{1,i})\,\|\,\ber(p_{0,i})).
\]
If $\mathrm{ObsBits}_i < \mathrm{ReqBits}_i$, the step is \textbf{under-budget}: the cited evidence does not justify the claimed confidence.

\subsection{Empirical Validation}

On synthetic binding tasks (Mistral-7B-Instruct, $k\in\{1024,2048\}$, $n=300$), the audit isolates errors effectively. At $k=2048$, the pass subset achieves 99.5\% accuracy while the flagged subset drops to 63.7\% (Table~\ref{tab:trace_synth_budget}).

\begin{table}[t]
\centering
\small
\begin{tabular}{lrrrrrr}
\toprule
Setting & $k$ & Acc(\%) & Pass(\%) & Acc(pass)(\%) & Acc(flag)(\%) & Lift(pp) \\
\midrule
Mistral-7B (baseline) & 1024 & 98.3 & 54.8 & 100.0 & 97.0 & 3.0 \\
Mistral-7B (baseline) & 2048 & 87.0 & 68.6 & 99.5 & 63.7 & 35.8 \\
\bottomrule
\end{tabular}
\caption{Trace-budget audit ($\tau=0.75$). At large distance, errors concentrate in flagged traces.}
\label{tab:trace_synth_budget}
\end{table}

On QuALITY reading comprehension, we use an evidence-only verifier (since correct answers often require inference rather than literal entailment). Table~\ref{tab:quality_llm_audit_main} shows that passing the audit is strongly predictive of correctness: 81--83\% accuracy for pass vs.\ 59--62\% for flagged.

\begin{table}[t]
\centering
\small
\begin{tabular}{lrrrrrrr}
\toprule
Variant & $n$ & Pass(\%) & Acc(pass)(\%) & Acc(flag)(\%) & Lift(pp) & $\mathbb{E}[p_1]$ & $\mathbb{E}[p_0^{\mathrm{NE}}]$ \\
\midrule
baseline & 369 & 21.1 & 83.3 & 61.5 & 21.8 & 0.398 & 0.036 \\
rag & 378 & 25.1 & 81.1 & 60.4 & 20.6 & 0.428 & 0.039 \\
checkpoint & 383 & 24.0 & 82.6 & 59.5 & 23.2 & 0.430 & 0.037 \\
\bottomrule
\end{tabular}
\caption{QuALITY audit with null-family envelope certification ($\tau=0.75$, Llama-3.1-8B-Instruct). Pass means the budget test holds for all null operators $\nu\in\mathcal{N}$ (redact, delete, same-length mask, and no-evidence). $\mathbb{E}[p_0^{\mathrm{NE}}]$ reports the no-evidence pseudo-prior.}
\label{tab:quality_llm_audit_main}
\end{table}

Additional QuALITY robustness sweeps across $\tau$ and null operators appear in \cref{sec:quality_sweep}.

\section{Toolkit and Reproducibility}
\label{sec:toolkit}

\paragraph{Reproducibility package (anonymized).}
This submission includes an anonymized reproducibility package containing: (i) experiment scripts under \texttt{experiments/} (see also \texttt{experiments/results\_manifest.md}), (ii) the cached CSV/JSON outputs used to populate the reported tables, and (iii) a small plotting script (\texttt{experiments/spotlight\_make\_figures.py}, requires \texttt{matplotlib}) that regenerates Figure~\ref{fig:spotlight_results} from cached outputs. Core Python dependencies are listed in \texttt{experiments/requirements.txt}. We plan to release a public repository upon acceptance.

\paragraph{Toolkit.}
We include a toolkit that implements the diagnostics described in this paper for any API model exposing token logprobs.

The toolkit provides four capabilities. First, \textbf{Stage~2A/2B scoring}: given a prompt and candidate set, compute gate margin, value margin, and stage classification from the model's next-token logprobs. Second, \textbf{structured trace generation}: prompt the model to produce a JSON-formatted trace with span citations, using schema-constrained decoding. Third, \textbf{trace verification}: run a separate verifier model on each claim-span pair and compute $p_0$ (scrubbed) and $p_1$ (full context) from logprobs. Fourth, \textbf{budget computation}: calculate $\mathrm{ReqBits}$ and $\mathrm{ObsBits}$ for each trace step and flag under-budget claims.

We additionally include a \textbf{robustness suite} implementing option-randomized long-context binding (including $k=0$ controls and checkpointing) and output-only routing certificates.

\paragraph{LLM usage in the toolkit.}
The trace-auditing extension uses a separate verifier model queried via API logprobs (our implementation supports OpenAI's API, including \texttt{gpt-4o-mini}, and can be extended to other providers). The core binding experiments and mechanistic analyses in this paper use open-weight models locally; they do not require API models.

\paragraph{Existing assets and licenses/terms.}
We evaluate released model weights under their respective terms of use: Qwen2.5 models under the Qwen Research license (\url{https://huggingface.co/Qwen/Qwen2.5-3B/blob/main/LICENSE}); Llama 3.2 models under the Llama 3.2 Community License (\url{https://huggingface.co/meta-llama/Llama-3.2-3B/blob/main/LICENSE.txt}) and associated use policy (\url{https://huggingface.co/meta-llama/Llama-3.2-3B/blob/main/USE_POLICY.md}); and Gemma 2 models under Google's Gemma terms (\url{https://ai.google.dev/gemma/terms}).

\paragraph{Example reproduction commands.}
To regenerate Figure~\ref{fig:spotlight_results} from cached outputs:
\begin{lstlisting}
python experiments/spotlight_make_figures.py
\end{lstlisting}

To rerun a representative stage-decomposition slice (Prediction~1; Qwen2.5-3B, $k=256$, $n=800$ per task):
\begin{lstlisting}
torchrun --nproc_per_node 1 experiments/prediction1_stage2b/stage2b_multimodel_suite.py \
  --models Qwen/Qwen2.5-3B \
  --tasks competing_vars primacy_recency \
  --k_values 256 \
  --filler_types decoy_heavy \
  --trials_per_condition 800 \
  --batch_size 1 \
  --decoy_reps 12 \
  --n_distractors 48 \
  --dtype bf16 \
  --attn_impl sdpa \
  --outdir repro_out/stage2b \
  --out_csv stage2b_summary.csv \
  --out_json stage2b_summary.json
\end{lstlisting}

To rerun the checkpointing sweep used in Figure~\ref{fig:spotlight_results}B (Prediction~3; Qwen2.5-3B, $n=400$ per cell):
\begin{lstlisting}
torchrun --nproc_per_node 1 experiments/prediction3_checkpointing/checkpoint_mitigation_suite.py \
  --models Qwen/Qwen2.5-3B \
  --tasks competing_vars primacy_recency \
  --k_values 256 512 1024 \
  --filler_types decoy_heavy \
  --trials_per_k 400 \
  --batch_size 1 \
  --checkpoint_every 128 \
  --checkpoint_mode oracle \
  --dtype bf16 \
  --attn_impl sdpa \
  --out_csv repro_out/checkpointing.csv \
  --out_json repro_out/checkpointing.json
\end{lstlisting}

To rerun the mechanistic localization sweep (activation patching; Qwen2.5-3B, $k=256$):
\begin{lstlisting}
python experiments/mechanistic_localization/mech_stage2b_circuit_suite.py \
  --model Qwen/Qwen2.5-3B \
  --task competing_vars \
  --k 256 \
  --prompt_style bracket \
  --clean_filler random \
  --corrupt_filler decoy_heavy \
  --decoy_reps 12 \
  --n_pairs 40 \
  --layers last8 \
  --top_layers 2 \
  --top_heads 12 \
  --dtype bf16 \
  --attn_impl sdpa \
  --outdir repro_out/mech
\end{lstlisting}

\section{Experimental Protocols and Additional Results}
\label{sec:protocols}

\subsection{Experimental Protocols}
\label{sec:exp_protocols}

\paragraph{Candidate sets and tokenization.}
All binding tasks in this paper are \emph{single-token} classification problems: the correct value $V$ is a single vocabulary token.
For each model, we build a model-specific candidate pool by filtering a fixed list of short English words (fruits/Greek letters/colors/etc.) to those that tokenize to exactly one non-special token under that model's tokenizer (preferring leading-space forms when applicable), yielding $|\Cset|\approx 50$ candidates.
We use a bracket prompt style (\texttt{KEY=[}\,$\cdot$\,\texttt{]}) to make the answer boundary explicit. In scripts where a string-based prompt is used, we include an extra delimiter (whitespace or a bracket separator) to avoid tokenizer-dependent merges at the value boundary; in token-ID–constructed prompts, values are inserted as standalone token IDs.
Unless otherwise stated, each example's candidate set contains the target, its main competitor(s), and $48$ additional distractors sampled from the candidate pool, so $|\Cset|\approx 50$ and chance accuracy is $\approx 2\%$.

\paragraph{Filler regimes.}
We vary the distance $k$ by inserting $k$ filler tokens between evidence and query.
We use four filler regimes: \texttt{repeat} (repeat a single filler token), \texttt{coherent} (repeat a short natural-language paragraph), \texttt{random} (sample random non-special tokens), and \texttt{decoy\_heavy} (mostly filler, but insert the competitor token \texttt{decoy\_reps} times at uniformly spaced positions, plus light random sprinkling).
Unless otherwise stated, main-text stage decomposition and probing results use \texttt{decoy\_heavy} with \texttt{decoy\_reps}=12 to induce Stage~2B errors.

\paragraph{Seeds and candidate pools.}
Candidate pools and distractor choices are seeded and thus can differ slightly across experiment suites; all reported results use fixed seeds and maintain $|\Cset|\approx 50$.

\paragraph{Tasks and variants.}
In addition to \textsc{competing\_vars} and \textsc{primacy\_recency} (defined in \cref{sec:stage}), we use \textsc{decoy\_injection}, where a single key is assigned once to a target value but the filler contains many repetitions of a \emph{decoy} competitor token; this isolates competition from frequency alone.
For completeness, the code also includes strawberry-style binding tasks (Appendix~\ref{sec:strawberry_appendix}) and a factorized prompt variant that reduces misbinding by explicitly separating competing statistics.

\paragraph{Decoding and stage metrics.}
We evaluate greedy next-token decoding at the readout position.
We compute Acc as $\Pp(\hat Y=V)$ (top-vocabulary token), Cand-Acc as $\Pp(Y=V)$ (best candidate token), GateGap as $(\max_{v\in\Cset} z(v))-(\max_{x\notin\Cset} z(x))$, and ValueGap as $z(V)-\max_{v\in\Cset\setminus\{V\}} z(v)$.
Stage~2A vs.\ Stage~2B are defined by whether the model's top token $\hat Y$ is outside vs.\ inside $\Cset$ on error trials.

\paragraph{Uncertainty (confidence intervals).}
For headline accuracies, we report 95\% Wilson binomial confidence intervals over the $n$ independent prompt instances in each cell (e.g., Figure~\ref{fig:spotlight_results} uses $n=400$). These CIs reflect \emph{instance-to-instance} variability under the stated prompt generator and fixed seeds; they do not capture variability from retraining or changing model checkpoints.

\paragraph{Compute and resources.}
All reported experiments are inference-only on released models (no training or fine-tuning of foundation models). As a hardware-agnostic proxy, we report the number of evaluated prompts and the maximum distance $k$.
Reference configuration for reproduction is a single NVIDIA A100 40GB (e.g., Colab A100) with \texttt{batch\_size=1} as in the Appendix commands; wall-clock time depends on hardware and scales approximately linearly with the number of evaluated prompts.
The Stage~2B suite cached in \texttt{experiments/prediction1\_stage2b/} evaluates 48{,}000 prompts total (60 conditions $\times$ $n=800$) with $k$ up to 2048.
The checkpointing suites cached in \texttt{experiments/prediction3\_checkpointing/} evaluate 9{,}600 prompts total across Qwen and Gemma (12 evaluated conditions $\times$ $n=400$ per model: 2 tasks $\times$ 3 $k$ values $\times$ \{baseline, checkpoint\}), with $k$ up to 1024.
Mechanistic patching uses paired clean/corrupt prompts (typically $n\approx 160$ pairs per model/setting; see Appendix tables).
The provided scripts run on NVIDIA GPUs via PyTorch/\texttt{torchrun}; the mechanistic review-pack scripts are designed to be runnable on a Colab A100-class GPU (see script headers).

\paragraph{Probing.}
For probe results, we train a linear softmax probe on hidden states at the readout position to predict the target token ID.
For each (task, $k$, filler) condition we collect $n$ trials, split them 50/50 into train/test, train for 100 full-batch gradient steps with $L_2$ regularization, and report test accuracy specifically on Stage~2B error trials (and on correct trials as a sanity check).

\paragraph{Activation patching.}
For mechanistic localization, we build paired \emph{clean/corrupt} prompts that share the same key/value tokens and positions and differ only in the filler distribution (clean: \texttt{random}; corrupt: \texttt{decoy\_heavy} with the competitor injected).
We patch clean activations into the corrupt run and measure restoration of the target-vs-competitor logit margin at the readout position.
At the layer-component level, we patch either (i) the concatenated attention-head output vector (the input to \texttt{o\_proj}) or (ii) the MLP output vector.
At the head level, we patch or ablate individual head slices at the \texttt{o\_proj} input.

\subsection{Head-Level Causal Analysis: Gemma ``Anti-Recency'' Head}
\label{sec:gemma_anti_recency}

Table~\ref{tab:gemma_anti_recency_head} provides a concrete example of a model-specific head-level motif in Gemma-2-2b-it on \textsc{primacy\_recency} at $k=256$.
We find a head whose ablation \emph{improves} the target-vs-competitor margin (a misbinding/interference head) and a head whose ablation \emph{hurts} the margin (a counterbalancing head that resists recency), motivating the informal term ``anti-recency.''

\begin{table}[t]
\centering
\small
\setlength{\tabcolsep}{6pt}
\begin{tabular}{lrrr}
\toprule
Head & Mean patch restoration & Mean ablation $\Delta$margin & $n$ \\
\midrule
L25H2 (misbinding/interference) & 0.169 & +0.508 & 159 \\
L22H3 (counterbalances recency) & 0.121 & -0.321 & 159 \\
\bottomrule
\end{tabular}
\caption{Gemma-2-2b-it head-level patching/ablation on \textsc{primacy\_recency} ($k=256$; bracket prompt style; clean filler \texttt{random}; corrupt filler \texttt{decoy\_heavy} with \texttt{decoy\_reps}=12). Patch restoration is normalized margin recovery; ablation $\Delta$margin is $m_{\mathrm{ablated}}-m_{\mathrm{corrupt}}$ (positive means ablating the head helps correct-vs-competitor margin; negative means ablation hurts).}
\label{tab:gemma_anti_recency_head}
\end{table}

\subsection{Baseline Accuracy}

\begin{table}[t]
\centering
\small
\begin{tabular}{llrrr}
\toprule
Task & $k$ & Qwen2.5-3B & Llama-3.2-3B-Inst. & Gemma-2-2b-it \\
\midrule
\multirow{4}{*}{competing\_vars} 
& 256 & 0.203 & \textbf{0.993} & \textbf{1.000} \\
& 512 & 0.008 & \textbf{0.993} & \textbf{1.000} \\
& 1024 & 0.000 & 0.455 & \textbf{0.998} \\
& 2048 & --- & 0.000 & --- \\
\midrule
\multirow{4}{*}{primacy\_recency} 
& 256 & 0.060 & \textbf{0.868} & \textbf{0.963} \\
& 512 & 0.003 & 0.553 & \textbf{0.798} \\
& 1024 & 0.000 & 0.003 & 0.440 \\
& 2048 & --- & 0.000 & --- \\
\bottomrule
\end{tabular}
\caption{Baseline accuracy ($n=400$; greedy next-token decoding; bracket prompt style; \texttt{decoy\_heavy} filler; $|\Cset|\approx 50$ single-token candidates). All models eventually fail on \textsc{primacy\_recency} at sufficient distance.}
\label{tab:baseline_comparison}
\end{table}

\subsection{Checkpointing (Full Results)}

\begin{table*}[t]
\centering
\small
\setlength{\tabcolsep}{4pt}
\begin{tabular}{llrrrrrr}
\toprule
Model & Task & $k$ & Baseline & +Checkpoint & $\Delta$Acc & ValueGap$_\text{base}$ & ValueGap$_\text{chk}$ \\
\midrule
\multirow{6}{*}{Qwen2.5-3B} 
& competing\_vars & 256 & 0.203 & \textbf{0.973} & +0.770 & -1.02 & +2.81 \\
& competing\_vars & 512 & 0.008 & \textbf{0.993} & +0.985 & -5.75 & +2.61 \\
& competing\_vars & 1024 & 0.000 & \textbf{0.998} & +0.998 & -9.93 & +2.88 \\
\cmidrule{2-8}
& primacy\_recency & 256 & 0.060 & \textbf{0.760} & +0.700 & -2.88 & +0.80 \\
& primacy\_recency & 512 & 0.003 & \textbf{0.573} & +0.570 & -6.53 & +0.26 \\
& primacy\_recency & 1024 & 0.000 & \textbf{0.763} & +0.763 & -9.05 & +1.01 \\
\midrule
\multirow{7}{*}{Llama-3.2-3B-Inst.} 
& competing\_vars & 256 & 0.993 & 0.998 & +0.005 & +8.89 & +9.20 \\
& competing\_vars & 1024 & 0.455 & \textbf{0.998} & +0.543 & +0.03 & +8.96 \\
& competing\_vars & 2048 & 0.000 & \textbf{0.853} & +0.853 & -3.78 & +1.44 \\
\cmidrule{2-8}
& primacy\_recency & 256 & 0.868 & \textbf{0.995} & +0.127 & +1.81 & +3.48 \\
& primacy\_recency & 512 & 0.553 & \textbf{0.973} & +0.420 & +0.16 & +2.95 \\
& primacy\_recency & 1024 & 0.003 & \textbf{0.915} & +0.912 & -4.30 & +1.64 \\
& primacy\_recency & 2048 & 0.000 & 0.003 & +0.003 & -4.02 & -1.35 \\
\midrule
\multirow{6}{*}{Gemma-2-2b-it} 
& competing\_vars & 256 & 1.000 & 1.000 & 0.000 & +8.82 & +7.98 \\
& competing\_vars & 512 & 1.000 & 1.000 & 0.000 & +7.72 & +8.10 \\
& competing\_vars & 1024 & 0.998 & 1.000 & +0.003 & +6.55 & +7.71 \\
\cmidrule{2-8}
& primacy\_recency & 256 & 0.963 & \textbf{0.993} & +0.030 & +2.74 & +3.80 \\
& primacy\_recency & 512 & 0.798 & \textbf{0.963} & +0.165 & +1.37 & +2.36 \\
& primacy\_recency & 1024 & 0.440 & \textbf{0.968} & +0.528 & -0.62 & +2.70 \\
\bottomrule
\end{tabular}
\caption{Checkpointing results ($n=400$; greedy next-token decoding; bracket prompt style; \texttt{decoy\_heavy} filler; $|\Cset|\approx 50$). Checkpoints insert a compact serialization of the true binding every 128 tokens in the tail filler. Qwen recovers from 0\% to 99.8\% at $k=1024$. Exception: Llama at $k=2048$ on \textsc{primacy\_recency} shows checkpoint failure, as Stage~2A gating has collapsed and cannot be recovered by re-statement.}
\label{tab:checkpoint_results_full}
\end{table*}

\subsection{Checkpointing Controls}
\label{sec:checkpoint_controls}

Checkpointing could in principle help by repeating formatting cues rather than by injecting the correct content.
To separate these explanations, our checkpointing suite defines two simple controls:
\textbf{sham checkpointing}, which inserts an irrelevant but similarly formatted checkpoint statement, and
\textbf{wrong checkpointing}, which inserts a checkpoint that states the competitor/decoy value.
Under a content-specific routing interpretation, we would expect sham checkpointing to provide little benefit, while wrong checkpointing should systematically steer the output toward the wrong value.
The experiment script supports these conditions via \texttt{--checkpoint\_mode \{oracle,sham,wrong\}}; in this paper we report oracle checkpointing results.

\subsection{Naturalistic \texttt{notes\_binding} at 8B--9B Scale}

\begin{table}[t]
\centering
\small
\setlength{\tabcolsep}{4pt}
\begin{tabular}{r|ccc|ccc}
\toprule
& \multicolumn{3}{c|}{\textbf{Meta-Llama-3.1-8B-Instruct}} & \multicolumn{3}{c}{\textbf{Gemma-2-9b-it}} \\
$k$ & Acc (base) & $P[Z]$ (base) & Acc (+Chk) & Acc (base) & $P[Z]$ (base) & Acc (+Chk) \\
\midrule
0    & 0.435 & 0.185 & 0.510 & 0.330 & 0.490 & 1.000 \\
64   & 0.575 & 0.130 & 0.590 & 0.415 & 0.300 & 1.000 \\
256  & 0.550 & 0.155 & 0.550 & 0.640 & 0.150 & 0.995 \\
1024 & 0.435 & 0.105 & 0.580 & 0.740 & 0.015 & 1.000 \\
2048 & 0.430 & 0.055 & 0.600 & 0.820 & 0.000 & 1.000 \\
\bottomrule
\end{tabular}
\caption{Naturalistic \texttt{notes\_binding} at 8B--9B scale with randomized choice ordering ($n=200$ per cell).
$P[Z]$ is the abstention rate (Stage~2A failure).
Llama degrades at long distance and is partially recovered by checkpointing; Gemma is near-perfect under checkpointing and
exhibits a large reduction in abstention as $k$ increases.}
\label{tab:notes_binding_scale}
\end{table}

\begin{table}[h]
\centering
\scriptsize
\begin{tabular}{llrr|llrr|llrr}
\toprule
\multicolumn{4}{c|}{\textbf{Qwen2.5-3B-Instruct}} & \multicolumn{4}{c|}{\textbf{Llama-3.2-3B-Instruct}} & \multicolumn{4}{c}{\textbf{Gemma-2-2b-it}} \\
L & C & Rest & $\Delta$M & L & C & Rest & $\Delta$M & L & C & Rest & $\Delta$M \\
\midrule
28 & attn & -0.03 & -0.50 & 20 & attn & -0.04 & 0.04 & 18 & attn & 0.02 & -0.41 \\
28 & mlp & 0.03 & 0.59 & 20 & mlp & 0.04 & 0.03 & 18 & mlp & 0.01 & 0.17 \\
29 & attn & 0.06 & 1.02 & 21 & attn & 0.23 & 0.06 & 19 & attn & 0.10 & 0.12 \\
29 & mlp & -0.01 & -0.24 & 21 & mlp & -0.04 & 0.00 & 19 & mlp & -0.03 & -0.05 \\
30 & attn & -0.04 & -0.71 & 22 & attn & -0.03 & 0.09 & 20 & attn & -0.01 & 0.02 \\
30 & mlp & 0.07 & 1.34 & 22 & mlp & 0.01 & 0.01 & 20 & mlp & -0.01 & 0.01 \\
31 & attn & -0.01 & -0.19 & 23 & attn & -0.03 & -0.15 & 21 & attn & -0.09 & -0.55 \\
31 & mlp & 0.08 & 1.45 & 23 & mlp & 0.07 & 0.14 & 21 & mlp & 0.00 & 0.21 \\
32 & attn & 0.11 & 2.09 & 24 & attn & 0.02 & -0.04 & 22 & attn & 0.31 & 0.40 \\
32 & mlp & 0.05 & 0.81 & 24 & mlp & 0.05 & 0.05 & 22 & mlp & -0.01 & 0.02 \\
33 & attn & 0.18 & 3.44 & 25 & attn & 0.10 & 0.23 & 23 & attn & 0.03 & 0.03 \\
33 & mlp & -0.04 & -0.81 & 25 & mlp & -0.10 & -0.19 & 23 & mlp & -0.16 & 0.07 \\
34 & attn & 0.01 & 0.21 & 26 & attn & 0.21 & 0.24 & 24 & attn & -0.01 & 0.04 \\
34 & mlp & -0.11 & -1.99 & 26 & mlp & -0.01 & -0.22 & 24 & mlp & 0.02 & 0.10 \\
35 & attn & 0.01 & 0.10 & 27 & attn & 0.19 & 0.20 & 25 & attn & 0.36 & 0.98 \\
35 & mlp & -0.20 & -3.30 & 27 & mlp & 0.07 & 0.18 & 25 & mlp & -0.19 & -0.47 \\
\bottomrule
\end{tabular}
\caption{Complete layer-component patching results (paired clean/corrupt prompts; greedy next-token decoding; bracket prompt style; $k=256$; clean filler \texttt{random}; corrupt filler \texttt{decoy\_heavy} with \texttt{decoy\_reps}=12; $n\approx 160$ Stage~2B pairs per model). Rest is mean normalized restoration $(m_{\mathrm{patched}}-m_{\mathrm{corrupt}})/(m_{\mathrm{clean}}-m_{\mathrm{corrupt}})$ of the target-vs-competitor logit margin $m$; $\Delta M$ is the mean change in margin $(m_{\mathrm{patched}}-m_{\mathrm{corrupt}})$. Qwen uses \textsc{competing\_vars}; Llama/Gemma use \textsc{primacy\_recency}.}
\label{tab:full_layers}
\end{table}

\begin{table}[t]
\centering
\scriptsize
\begin{tabular}{llrrrrrrr}
\toprule
Task & Filler & $k$ & Acc & Cand-Acc & Frac-2A & Frac-2B & GateGap & ValueGap\\
\midrule
competing\_vars & decoy\_heavy & 128 & 0.999 & 0.999 & 0.000 & 1.000 & 3.74 & 5.12 \\
competing\_vars & decoy\_heavy & 256 & 0.107 & 0.158 & 0.350 & 0.650 & 0.56 & -1.81 \\
competing\_vars & decoy\_heavy & 512 & 0.003 & 0.001 & 0.023 & 0.977 & 1.99 & -7.26 \\
competing\_vars & decoy\_heavy & 1024 & 0.000 & 0.000 & 0.033 & 0.968 & 1.79 & -10.10 \\
competing\_vars & repeat & 128 & 0.030 & 0.080 & 0.570 & 0.430 & -0.07 & -3.39 \\
competing\_vars & repeat & 256 & 0.026 & 0.069 & 0.792 & 0.208 & -0.80 & -4.33 \\
competing\_vars & repeat & 512 & 0.000 & 0.094 & 0.988 & 0.013 & -2.97 & -3.46 \\
competing\_vars & repeat & 1024 & 0.000 & 0.249 & 1.000 & 0.000 & -7.76 & -1.83 \\
decoy\_injection & decoy\_heavy & 128 & 0.979 & 0.994 & 0.824 & 0.176 & 3.23 & 7.14 \\
decoy\_injection & decoy\_heavy & 256 & 0.240 & 0.907 & 0.993 & 0.007 & -1.17 & 2.41 \\
decoy\_injection & decoy\_heavy & 512 & 0.000 & 0.294 & 1.000 & 0.000 & -6.92 & -1.24 \\
decoy\_injection & decoy\_heavy & 1024 & 0.000 & 0.149 & 1.000 & 0.000 & -8.09 & -2.57 \\
decoy\_injection & repeat & 128 & 0.065 & 0.996 & 1.000 & 0.000 & -2.03 & 5.03 \\
decoy\_injection & repeat & 256 & 0.000 & 0.779 & 1.000 & 0.000 & -6.98 & 1.51 \\
decoy\_injection & repeat & 512 & 0.000 & 0.292 & 1.000 & 0.000 & -9.58 & -1.09 \\
decoy\_injection & repeat & 1024 & 0.000 & 0.295 & 1.000 & 0.000 & -10.53 & -1.33 \\
primacy\_recency & decoy\_heavy & 128 & 0.734 & 0.755 & 0.146 & 0.854 & 1.68 & 0.64 \\
primacy\_recency & decoy\_heavy & 256 & 0.043 & 0.045 & 0.185 & 0.815 & 0.96 & -3.40 \\
primacy\_recency & decoy\_heavy & 512 & 0.000 & 0.000 & 0.005 & 0.995 & 2.55 & -7.77 \\
primacy\_recency & decoy\_heavy & 1024 & 0.000 & 0.000 & 0.049 & 0.951 & 1.31 & -8.14 \\
primacy\_recency & repeat & 128 & 0.036 & 0.046 & 0.370 & 0.630 & 0.33 & -3.69 \\
primacy\_recency & repeat & 256 & 0.020 & 0.046 & 0.807 & 0.193 & -0.62 & -3.86 \\
primacy\_recency & repeat & 512 & 0.000 & 0.142 & 1.000 & 0.000 & -3.42 & -2.47 \\
primacy\_recency & repeat & 1024 & 0.000 & 0.139 & 1.000 & 0.000 & -7.06 & -2.48 \\
\bottomrule
\end{tabular}
\caption{Complete Stage~2A/2B results for Qwen2.5-3B ($n=800$ per row; greedy next-token decoding; bracket prompt style; $|\Cset|\approx 50$ single-token candidates). Acc is $\Pp(\hat Y=V)$ (top token), Cand-Acc is $\Pp(Y=V)$ (best candidate). \textbf{Frac-2A/Frac-2B are fractions of errors} (conditional on $\hat Y\neq V$).}
\label{tab:qwen_full}
\end{table}

\begin{table}[t]
\centering
\small
\setlength{\tabcolsep}{6pt}
\begin{tabular}{lrrrrr}
\toprule
Task & $k$ & Cand-Acc & $I_{\mathrm{used}}\ge$ & Probe@35 (Stage~2B err) & $I_{\mathrm{probe,err}}^{\mathrm{proxy}}$ \\
\midrule
competing\_vars & 256 & 0.255 & 0.45 & 0.739 & 2.32 \\
competing\_vars & 512 & 0.000 & 0.00 & 0.390 & 0.87 \\
primacy\_recency & 256 & 0.074 & 0.04 & 0.354 & 0.75 \\
primacy\_recency & 512 & 0.000 & 0.00 & 0.124 & 0.13 \\
\bottomrule
\end{tabular}
\caption{Converting accuracies into mutual-information quantities using the Fano expression $\Phi_M(\varepsilon)$ from \cref{thm:fano} with $M=50$ (candidate set size $\approx 50$; nats). For the model's candidate decision (Cand-Acc; unconditional, with $V$ uniform by construction), $\Phi_M$ is a valid lower bound on $I_{\mathrm{used}}$. For the final-layer probe accuracy restricted to Stage~2B error trials, we report the same quantity as an \emph{interpretable proxy} (conditioning can change $H(V)$, so the lower-bound interpretation need not hold). For finite-sample cells with Cand-Acc $<1/M$ we clamp the bound to 0.}
\label{tab:routing_mi_bounds}
\end{table}

\subsection{QuALITY Robustness}
\label{sec:quality_sweep}

Table~\ref{tab:quality_sweep} shows robustness of the \emph{null-family envelope} evidence-only audit across reliability targets $\tau\in\{0.65,0.70,0.75,0.80\}$.

\begin{table}[h]
\centering
\small
\begin{tabular}{r l r r r r}
\toprule
$\tau$ & Variant & Pass(\%) & Acc(pass)(\%) & Acc(flag)(\%) & Lift(pp) \\
\midrule
0.65 & baseline   & 24.7 & 83.5 & 60.4 & 23.1 \\
0.65 & checkpoint & 27.4 & 81.0 & 59.0 & 22.0 \\
0.65 & rag        & 28.6 & 79.6 & 60.0 & 19.6 \\
\midrule
0.70 & baseline   & 22.5 & 84.3 & 60.8 & 23.5 \\
0.70 & checkpoint & 25.3 & 82.5 & 59.1 & 23.4 \\
0.70 & rag        & 26.2 & 81.8 & 59.9 & 22.0 \\
\midrule
0.75 & baseline   & 21.1 & 83.3 & 61.5 & 21.8 \\
0.75 & checkpoint & 24.0 & 82.6 & 59.5 & 23.2 \\
0.75 & rag        & 25.1 & 81.1 & 60.4 & 20.6 \\
\midrule
0.80 & baseline   & 15.7 & 87.9 & 62.1 & 25.9 \\
0.80 & checkpoint & 20.1 & 83.1 & 60.5 & 22.7 \\
0.80 & rag        & 21.2 & 86.2 & 60.1 & 26.2 \\
\bottomrule
\end{tabular}
\caption{Robustness of the QuALITY null-family envelope audit across reliability targets $\tau$. Passing is consistently associated with substantially higher answer accuracy.}
\label{tab:quality_sweep}
\end{table}

\begin{table}[h]
\centering
\small
\begin{tabular}{l l r r r r}
\toprule
Null operator & Variant & Pass(\%) & Acc(pass)(\%) & Acc(flag)(\%) & Lift(pp) \\
\midrule
no-evidence & baseline & 22.8 & 79.8 & 62.1 & 17.7 \\
no-evidence & checkpoint & 25.1 & 81.2 & 59.6 & 21.7 \\
no-evidence & rag & 26.5 & 79.0 & 60.8 & 18.2 \\
redact-span & baseline & 27.4 & 81.2 & 60.4 & 20.7 \\
redact-span & checkpoint & 30.8 & 78.8 & 58.9 & 19.9 \\
redact-span & rag & 31.2 & 80.5 & 58.8 & 21.7 \\
delete-span & baseline & 29.3 & 82.4 & 59.4 & 23.0 \\
delete-span & checkpoint & 31.6 & 77.7 & 59.2 & 18.5 \\
delete-span & rag & 33.1 & 80.0 & 58.5 & 21.5 \\
mask-same-len & baseline & 26.0 & 84.4 & 59.7 & 24.7 \\
mask-same-len & checkpoint & 31.1 & 80.7 & 58.0 & 22.7 \\
mask-same-len & rag & 30.4 & 80.9 & 58.9 & 21.9 \\
envelope (all) & baseline & 21.1 & 83.3 & 61.5 & 21.8 \\
envelope (all) & checkpoint & 24.0 & 82.6 & 59.5 & 23.2 \\
envelope (all) & rag & 25.1 & 81.1 & 60.4 & 20.6 \\
\bottomrule
\end{tabular}
\caption{Sensitivity of QuALITY audit to the concrete pseudo-prior operator at $\tau=0.75$. ``Envelope'' requires the budget test to pass for all null operators; it provides a conservative, assumption-light certificate while retaining strong predictive lift.}
\label{tab:quality_null_family}
\end{table}

\begin{table}[h]
\centering
\small
\begin{tabular}{r r r r l}
\toprule
$k$ & $n$ & Pass(\%) & $\mathbb{E}[p_1]$ & $\mathbb{E}[p_0^{\min},p_0^{\max}]$ \\
\midrule
512 & 200 & 66.0 & 0.756 & [0.179, 0.272] \\
1024 & 200 & 0.0 & 0.728 & [0.097, 0.184] \\
2048 & 200 & 100.0 & 0.841 & [0.181, 0.361] \\
\bottomrule
\end{tabular}
\caption{Null-family statistics for a synthetic binding audit ($\tau=0.75$). Although $p_0$ varies across null operators, the pass/fail decision is invariant across the null family in this setting (envelope equals any single-null decision).}
\label{tab:binding_null_family}
\end{table}

\end{document}